%
%
%
%

\documentclass[letterpaper]{article} 

\ifdefined\aaaianonymous
    \usepackage[submission]{aaai2026}  
\else
    \usepackage{aaai2026}              
\fi

\usepackage{times}  
\usepackage{helvet}  
\usepackage{courier}  
\usepackage[hyphens]{url}  
\usepackage{graphicx} 
\urlstyle{rm} 
\def\UrlFont{\rm}  
\usepackage{natbib}  
\usepackage{caption} 
\frenchspacing  
\setlength{\pdfpagewidth}{8.5in} 
\setlength{\pdfpageheight}{11in} 

%
\usepackage{algorithm}
\usepackage{algorithmic}
\usepackage{amsmath}
\usepackage{amssymb} 
\usepackage{cite}
\usepackage{enumitem}
\usepackage{amsmath}
\usepackage{amsthm}
\usepackage{booktabs}
\usepackage{subcaption}
\usepackage{pifont}

\usepackage{multirow}
\usepackage[table]{xcolor}

\theoremstyle{plain}
%
\usepackage{newfloat}
\usepackage{listings}
\DeclareCaptionStyle{ruled}{labelfont=normalfont,labelsep=colon,strut=off} 
\lstset{%
	basicstyle={\footnotesize\ttfamily},
	numbers=left,numberstyle=\footnotesize,xleftmargin=2em,
	aboveskip=0pt,belowskip=0pt,%
	showstringspaces=false,tabsize=2,breaklines=true}
\floatstyle{ruled}
\newfloat{listing}{tb}{lst}{}
\floatname{listing}{Listing}

\newtheorem{definition}{Definition}

\newtheorem{proposition}{Proposition}

\newtheorem{assumption}{Assumption}
%
\pdfinfo{
/TemplateVersion (2026.1)
}

\setcounter{secnumdepth}{0} 

%

\ifdefined\aaaianonymous
    \title{AAAI Press Anonymous Submission\\Instructions for Authors Using \LaTeX{}}
\else
    \title{AAAI Press Formatting Instructions \\for Authors Using \LaTeX{} --- A Guide}
\fi

\title{Rethinking Crystal Symmetry Prediction: \\A Decoupled Perspective}
\author{
    Liheng Yu,\textsuperscript{\rm 1}
    Zhe Zhao,\textsuperscript{\rm 1}
    Xucong Wang,\textsuperscript{\rm 1}
    Di Wu,\textsuperscript{\rm 1}
    Pengkun Wang\textsuperscript{\rm 1,2}\thanks{Corresponding author.} \\ 
}

\affiliations{
    \textsuperscript{\rm 1} University of Science and Technology of China, Hefei 230236, China\\
    \textsuperscript{\rm 2} Suzhou Institute for Advanced Research, University of Science and Technology of China, Suzhou 215123, China\\
    \{yuliheng, zz4543, xuco, wdcxy\}@mail.ustc.edu.cn, pengkun@ustc.edu.cn, 
}

\usepackage{bibentry}

\begin{document}
\maketitle

\begin{abstract}
Efficiently and accurately determining the symmetry is a crucial step in the structural analysis of crystalline materials. Existing methods usually mindlessly apply deep learning models while ignoring the underlying chemical rules. More importantly, experiments show that they face a serious sub-property confusion (\texttt{SPC}) problem. To address the above challenges, from a decoupled perspective, we introduce the XRDecoupler framework, a problem-solving arsenal specifically designed to tackle the \texttt{SPC} problem. Imitating the thinking process of chemists, we innovatively incorporate multidimensional crystal symmetry information as superclass guidance to ensure that the model's prediction process aligns with chemical intuition. We further design a hierarchical PXRD pattern learning model and a multi-objective optimization approach to achieve high-quality representation and balanced optimization. Comprehensive evaluations on three mainstream databases (e.g., CCDC, CoREMOF, and InorganicData) demonstrate that XRDecoupler excels in performance, interpretability, and generalization. 
\end{abstract}

\section{Introduction}
Symmetry determination is a crucial step in powder X-ray diffraction (PXRD) based crystal structure prediction \cite{altomare2004space,spence1992symmetry}. The space group \cite{koster1957space} defines the symmetry characteristics of the crystal, including rotation, reflection, and inversion operations. These symmetries are fundamental to understanding the crystal structure and its properties \cite{o2020crystal,bhagavantam1949crystal}. Once the space group is established, scientists can build and optimize crystal structure models based on that symmetry \cite{han2025efficient,evans2011introduction}. Incorrect selection of space groups can lead to inaccurate or unreasonable structural models. Therefore, \textbf{\textit{efficiently and accurately identifying space group types remains a significant challenge.}}

\begin{figure}[t]
    \centering
    \includegraphics[width= \linewidth]{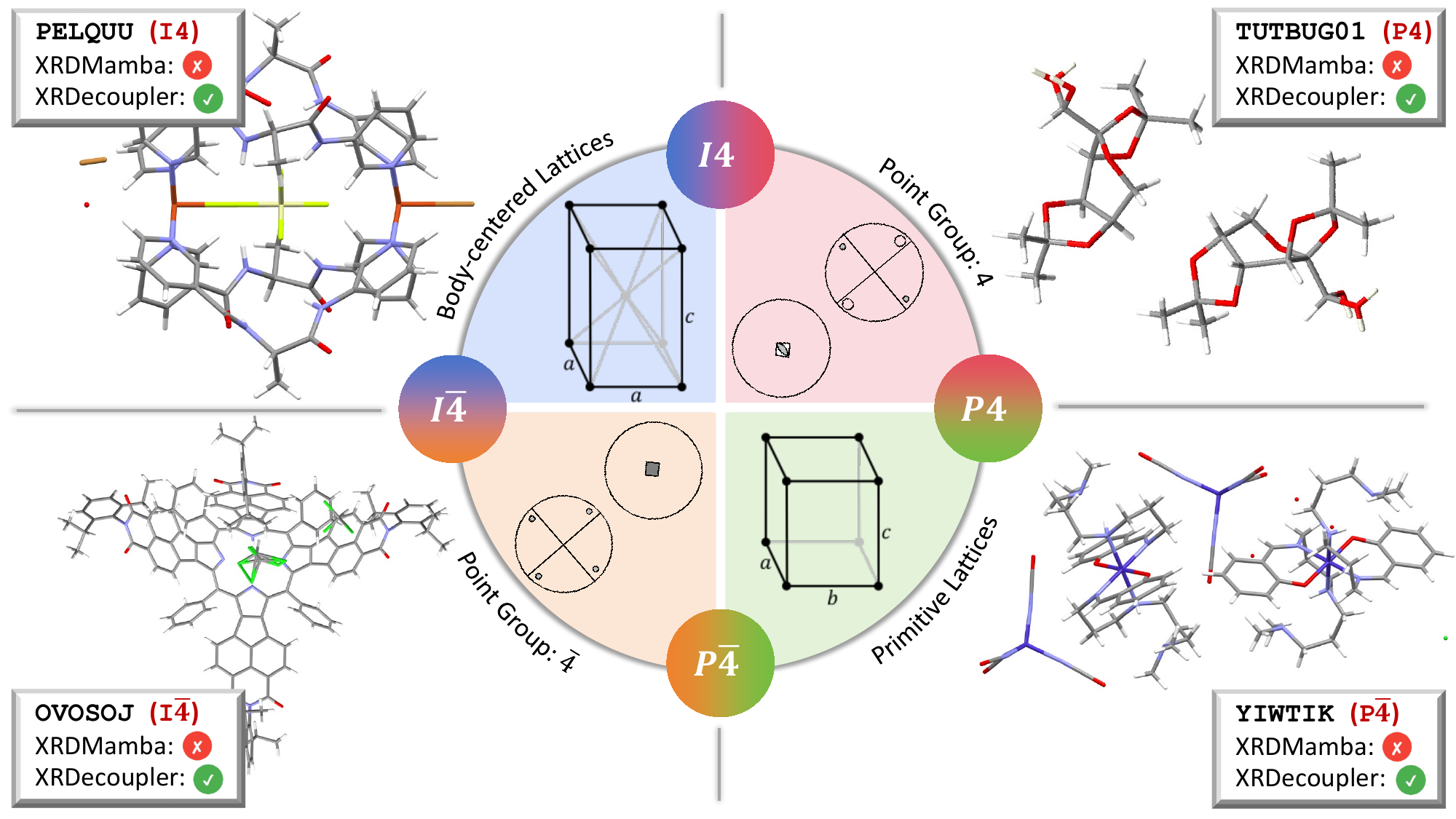}
    \caption{The \texttt{SPC} problem in the space group identification. Different colored blocks represent various symmetry classification systems, such as lattice types and point group types. We illustrate four space groups that current methods often confuse: \texttt{I4}, \texttt{I}$\overline{\texttt{4}}$, \texttt{P4}, and \texttt{P}$\overline{\texttt{4}}$. These space groups are intertwined and may belong to a coarser classification. We also present four representative crystal samples, i.e., \texttt{PELQUU}, \texttt{OVOSOJ}, \texttt{TUTBUG01}, and \texttt{YIWTIK}, demonstrating the capability of our method to decouple these confusions.} 
    \label{fig:confusion}\vspace{-4mm}
\end{figure}

 
As an early typical approach, many researchers attempted to apply a simple neural network for symmetry recognition within specific ranges, advancing the application of machine learning in this task \cite{park2017classification,ziletti2018insightful,oviedo2019fast,vecsei2019neural,dong2021deep}. \textit{However, these methods often remain limited to a single space group or a small subset of space groups, or specific material categories, which restricts their applicability and prevents broader generalization across different materials or space groups.} 
More importantly, simply imposing deep learning models on this task \textit{does not align with chemical intuition}. Although the latest methods attempt to enhance model architectures from a theoretical perspective of pattern \cite{yu2024xrdmamba}, they still face a challenging dilemma: \textit{models tend to mistake two crystalline materials with common sub-properties (such as lattice types) for belonging to the same space group (as shown in Figure~\ref{fig:confusion}), i.e., \textbf{the sub-property confusion (\texttt{SPC}) problem}}.
 
To clarify the essence of \texttt{SPC}, we systematically rethought existing methods and found that the causes of \texttt{SPC} are multifaceted: \ding{182} Previous methods directly process PXRD patterns in a conventional sequential manner, lacking the necessary chemical knowledge to guide the process. \ding{183} The optimization direction of these models often tends to favor latent sub-properties. \ding{184} The information overlap among sub-properties makes it difficult for the model to distinguish between closely related structures. 
To illustrate the \texttt{SPC} problem faced in space group prediction tasks more intuitively, we provide a clear diagram, as shown in Figure~\ref{fig:confusion}. Based on crystallography, the space groups \texttt{I4}, \texttt{I}$\overline{\texttt{4}}$, \texttt{P4}, and \texttt{P}$\overline{\texttt{4}}$ are categorized into two lattice types: I (body-centered) and P (primitive). They are also divided into two point group types: \texttt{4} (fourfold rotation axis) and $\overline{\texttt{4}}$ (fourfold roto-inversion axis). As a result, these four space groups share common sub-properties, making them particularly prone to confusion during classification. We also present the recognition results of different methods on these easily confused samples. \textit{\textbf{The SOTA model}} (XRDMamba \citep{yu2024xrdmamba}) \textit{\textbf{appears to be hindered by the \texttt{SPC}, while our method successfully predicts the space group types for each crystal.}}
 
Our method, \textbf{\textit{XRDecoupler}}, is a problem-solving arsenal specifically designed to tackle the \texttt{SPC} problem in space group prediction. We rethink \texttt{SPC} faced in previous research according to the thinking process of chemists and innovatively incorporate crystal systems, Bravais lattice types, and point groups as superclasses guidance, ensuring that the model's predictions align with chemical rules and enhancing its ability to distinguish between easily confused samples ($\blacktriangleright$ \textbf{solving Cause \ding{182}}). Additionally, different superclasses may focus on different aspects of the PXRD pattern. To address this, we propose a hierarchical PXRD pattern learning model that captures both local and global pattern information inherent in PXRD patterns, enabling efficient multi-superclass learning ($\blacktriangleright$ \textbf{solving Cause \ding{184}}). Furthermore, we utilize a multi-objective optimization approach to ensure that the model does not favor specific superclass tasks at the expense of others, guiding it toward an optimal training path ($\blacktriangleright$ \textbf{solving Cause \ding{183}}). To validate the effectiveness of XRDecoupler, we conducted comprehensive experimental evaluations on the well-known Cambridge Crystallographic Data Centre (CCDC) database \cite{allen1979cambridge}, CoREMOF \cite{chung2019advances}, and InorganicData \cite{salgado2023automated}. The results indicate that XRDecoupler significantly outperforms other state-of-the-art baselines. 

Our contributions are summarized as follows:
\begin{itemize}[left=0.5em]
    \item[\ding{182}] \textit{\textbf{Crucial Problem and Fresh Perspective}}: For the first time, we rethink existing methods and analyze the \texttt{SPC} problem faced in space group prediction tasks. From a decoupled perspective, we introduce the XRDecoupler framework to decouple it.
    \item[\ding{183}] \textit{\textbf{Novel Mechanism}}: Imitating the thinking process of chemists, we innovatively incorporate multidimensional crystal symmetry information as superclasses guidance, ensuring predictions align with chemical intuition.
    \item[\ding{184}] \textit{\textbf{Reasonable Pattern Learner}}: We proposed a hierarchical PXRD Pattern learning model that explicitly models the local and global pattern information in PXRD.
    \item[\ding{185}] \textit{\textbf{Brilliant Performance}}: Evaluations on well-known databases demonstrated the superior performance, interpretability, and generalization of XRDecoupler.
\end{itemize}

\section{Related Works}\label{asec:rw}
\subsubsection*{$\blacktriangleright$ Crystalline Space Group Prediction.}
Identifying crystal space groups is vital for structure prediction. While early work relied on computational methods \citep{werner1985treor}, recent efforts have focused on deep learning. Initial convolutional neural networks showed promise but were often trained or tested on limited datasets \citep{park2017classification, ziletti2018insightful, oviedo2019fast, vecsei2019neural, dong2021deep}. Subsequent models like NPCNN \citep{salgado2023automated} utilized more comprehensive data but struggled with accuracy. In response, RCNet \citep{chen2024crystal} improved performance by customizing categories and using residual structures, while XRDMamba \citep{yu2024xrdmamba} pioneered the integration of chemical knowledge and the Mamba architecture \citep{gu2023mamba,gu2023modeling}. However, these methods often neglect relevant chemical principles and suffer from high confusion between similar space groups. Our approach addresses this by designing a hierarchical framework and introducing multidimensional superclass knowledge to guide model optimization, thereby mitigating confusion and enhancing generalization.

\subsubsection*{$\blacktriangleright$ Superclass Learning.}
Superclass learning improves model performance by incorporating high-level class groupings as intermediate supervision. This technique has proven effective across diverse domains, such as improving feature differentiation in image classification \citep{dehkordi2022multi, wang2022sar}, mitigating data imbalance \citep{zhou2018deep}, and learning high-level relationships in graph neural networks \citep{du2023superdisco}. The benefits of this approach are well-documented. Given the inherent hierarchical classification of space groups, we introduce superclass learning to this task for the first time. This guides our model to learn more detailed structural knowledge about crystals, significantly enhancing its recognition performance. A more comprehensive review is provided in \textbf{\underline{Supplement H}}.

\section{Motivation}
\subsubsection*{$\blacktriangleright$ Space Group Prediction Task.} The PXRD data can be viewed as two vectors, $A$ and $I$, both of length $L$, representing the diffraction angles and diffraction intensities on the PXRD pattern, respectively. In crystallography, the space group is a description of the symmetry of crystals, with a total of 230 theoretically existing space groups. We define $D = \{X_i, Y_i\}_{i \in [n]}$ as a crystal dataset, where $ X_i = (A_i, I_i) $ represents the PXRD pattern data of a crystal, and $ Y_i \in \{0,1,...,229\} $ denotes the space group label of that crystal. We represent the process of space group prediction as a mapping relationship: $ f(X_i) \rightarrow Y_i $. We provide definitions about space groups in \textbf{\underline{Supplement A.1}} for understanding.
  
\subsubsection*{$\blacktriangleright$ Multidimensional Symmetry as Superclasses.} The superclass refers to a more general class in a hierarchical classification structure that contains other classes (called subclasses). For crystal structures, aside from space groups, there are many coarse-grained partition rules \citep{nespolo2018crystallographic}. For example, as shown in \textbf{\underline{Figure 1}} in \textbf{\underline{{Supplement A}}}, based on the symmetry of geometric morphology, crystals can be divided into 7 crystal systems \citep{o2020crystal}; based on primitive point symmetry, they can be classified into 32 point groups \citep{bradley2009mathematical}; according to the point symmetry of Bravais lattices, there are 7 lattice systems; based on the spatial symmetry of Bravais lattices, there are 14 types of Bravais lattices \citep{pitteri1996definition}; and based on combinations of point symmetry and translational symmetry, they can be classified into 73 algebraic crystal classes \cite{wilson1990space}, among others \cite{hahn1983international, prince2004international}. \textit{The partition rules mentioned above can all be viewed as superclasses of space groups, describing only the symmetric properties of a certain part of the crystal.} We also provide an explanation of the superclass mechanism in \textbf{\underline{{Supplement A.3}}}.

\section{Rethinking `Culprits' Behind \texttt{SPC}}
\subsubsection*{$\blacktriangleright$ Culprit 1: Lack of Chemical Knowledge.}
Existing models are typically adapted directly from other deep learning tasks, predominantly utilizing rudimentary convolutional neural networks. Although these models have yielded initial results, they often fall short in enhancing performance further. \textit{The current model setup fails to consider chemical knowledge integral to material structure analysis, making it challenging for the model to genuinely comprehend the intricate details of the structural patterns within the data}. Confusion often arises when the model only captures the coarse-grained aspects of the data. Hence, a model tailored with integrated chemical knowledge becomes indispensable to enhance the efficacy of space group identification.

\subsubsection*{$\blacktriangleright$ Culprit 2: Bias in Optimization.}
Successful prediction of a space group requires the model to accurately identify the sample's sub-symmetry properties, such as lattice type, point symmetry, and translation symmetry. During optimization, supervisory signals for each property must be transmitted to the model's parameters to guide learning. \textit{When the unique space group label is the sole supervisory signal, it can cause insufficient signals and imbalanced gradients across sub-properties, leading the model to learn only some of them effectively}. Consequently, the model struggles to distinguish between samples that share these learned sub-properties but belong to different space groups, causing significant confusion. For example, as shown in Figure \ref{fig:wrong_sample_acc}, we analyzed samples misclassified by a state-of-the-art method and evaluated the performance for each sub-attribute. It is evident the model learned more about lattice types and crystal systems but performed poorly on point group characteristics. As a result, \textit{the model exhibits significant confusion with samples that share the same lattice type and crystal system but differ in their point groups}.

\subsubsection*{$\blacktriangleright$ Culprit 3: Information Overlap Between Labels.}
The space group category serves as a fine-grained structural classification criterion, encompassing numerous sub-properties of crystal structures. For two distinct categories, they often share certain structural sub-properties, a phenomenon we refer to as information overlaps between labels. In the following, we will analyze how this information overlap can lead to confusion from the perspective of information theory \cite{ash2012information,batina2011mutual}.

For a given sample $X$, we denote its true confidence label as $Y_{truth}$ and any other space group class label as $Y_{other}$. In this context, for sample $X$, label $Y_{other}$ represents an incorrect label. 
\begin{assumption}
For the space group class label $Y$, it can be uniquely represented by $k$ independent structure sub-properties, i.e., $Y = (y_{1},y_{2},...,y_{i},...,y_{k})$, where $y_i$ denotes $i$-th sub-properties.
\label{assump:sub_prop}
\end{assumption}

\begin{figure}[t] 
    \vspace{-0.2cm}
    \centering
    \includegraphics[width=0.9\linewidth]{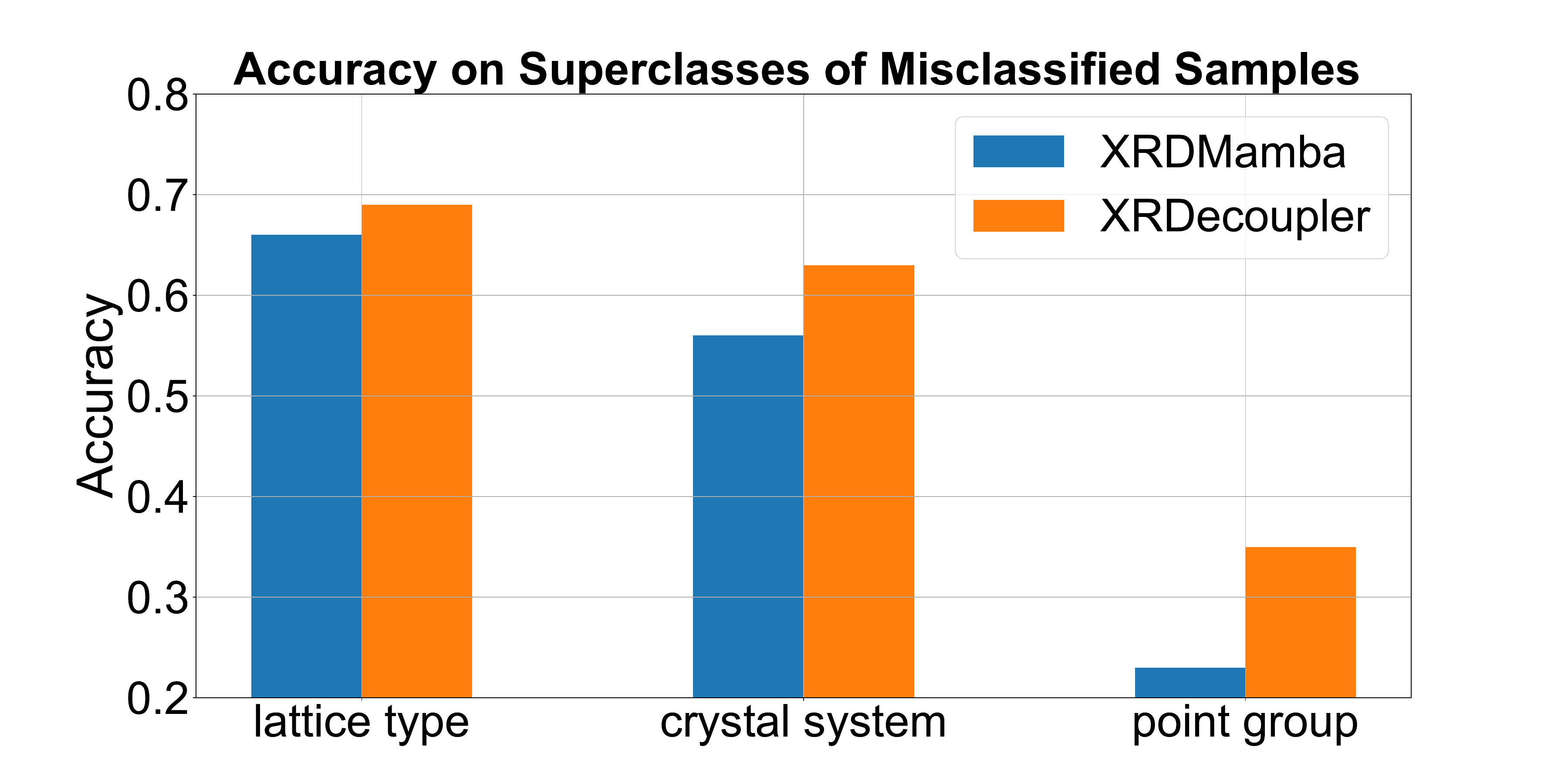}
    \caption{Accuracy statistics of samples with misidentified space groups on three sub-properties (e.g., lattice type, crystal system, and point group) on the SOTA method (XRDMamba \citep{yu2024xrdmamba}) and our proposed XRDecoupler.} \vspace{-0.2cm}
    \label{fig:wrong_sample_acc}
\end{figure} 

\begin{figure*}[t!]
    \centering
    \includegraphics[width= 0.85\textwidth]{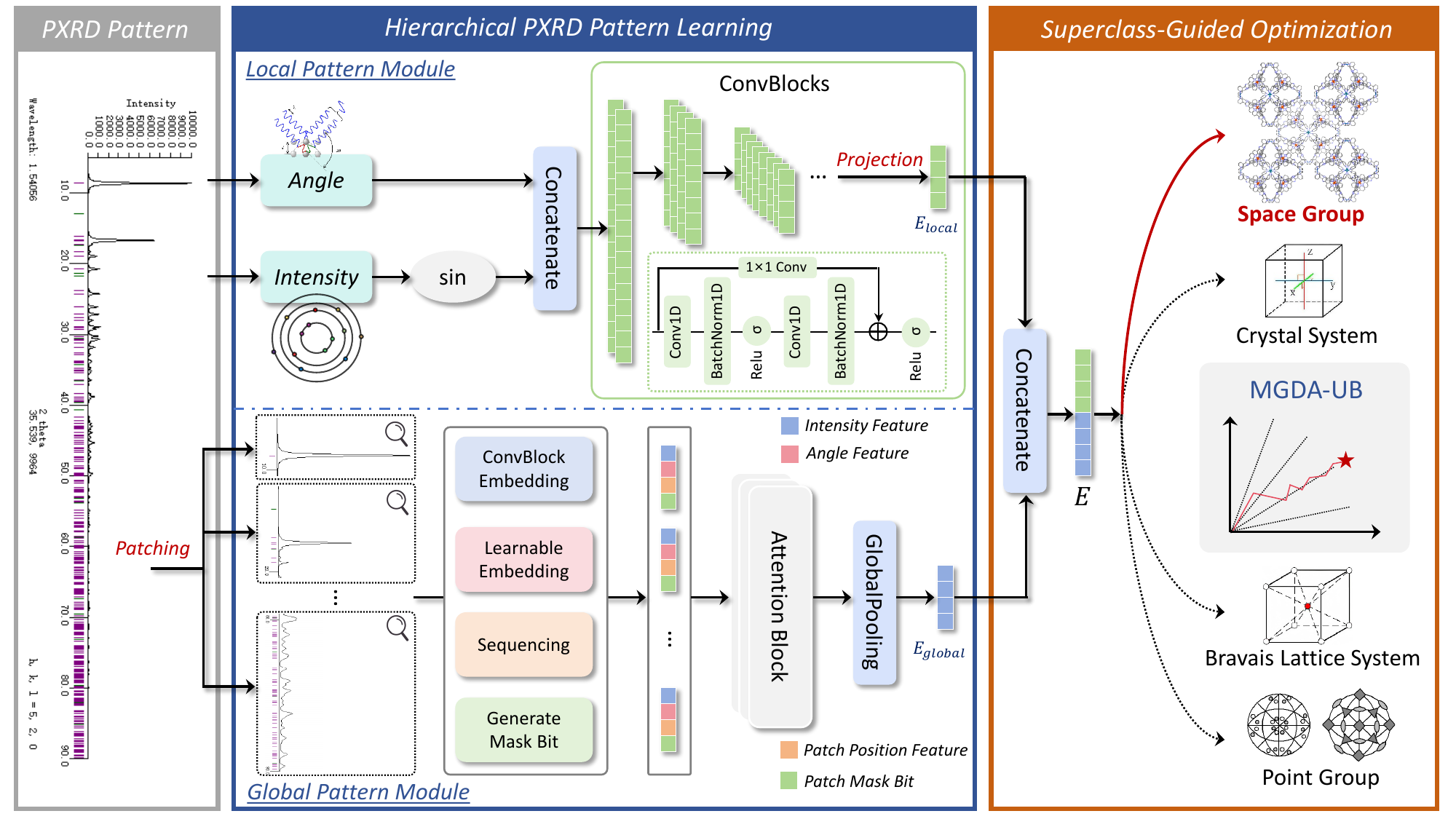}
    \caption{Overview of XRDecoupler.}
    \label{fig:model}
\end{figure*}

\begin{proposition}
For $Y_{truth}= ({y}^{truth}_{1},{y}^{truth}_{2},...,{y}^{truth}_{k})$ and $Y_{other}= ({y}^{other}_{1},{y}^{other}_{2},...,{y}^{other}_{k})$, there are some same structure sub-properties. That is, there exists $s$,$t$, such that
\begin{equation}
\begin{aligned}
    &\begin{cases}
    y_{{p_d}}^{truth} = y_{{p_d}}^{other} , d=[1,2,...,s]\\
    y_{{q_d}}^{truth} \neq y_{{q_d}}^{other} , d=[1,2,...,t]
    \end{cases} 
    \\ &\text{s.t.} \quad s + t = k, \;\; \{[p_i],[q_i]\} = \{1,2,3,...,k\}    
\end{aligned}
\end{equation}
\end{proposition}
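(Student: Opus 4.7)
The plan is to directly construct the partition of coordinate indices that realizes the claimed decomposition. Under Assumption 1, both $Y_{truth}$ and $Y_{other}$ are uniquely represented as $k$-tuples of independent sub-property values. First I would define the two disjoint index sets $P = \{i \in \{1,\dots,k\} : y_i^{truth} = y_i^{other}\}$ and $Q = \{i \in \{1,\dots,k\} : y_i^{truth} \neq y_i^{other}\}$. By construction $P \cap Q = \emptyset$ and $P \cup Q = \{1,\dots,k\}$, so setting $s = |P|$ and $t = |Q|$ immediately yields $s + t = k$. Enumerating $P = \{p_1, \dots, p_s\}$ and $Q = \{q_1, \dots, q_t\}$ in increasing order gives the index sequences $[p_d]$ and $[q_d]$ required in the statement, and the coordinate-wise equalities and inequalities then hold by the very definition of these two sets.

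The bound $t \geq 1$ is immediate from Assumption 1: if every coordinate agreed, the two tuples would coincide, contradicting $Y_{truth} \neq Y_{other}$. The substantive content that drives the subsequent information-overlap analysis is the existence of a nonempty $P$, i.e., $s \geq 1$. I would establish this by invoking the hierarchical structure of crystallographic classification: the 230 space groups are organized by a rich family of coarse sub-properties (7 crystal systems, 14 Bravais lattice types, 32 point groups, 73 algebraic crystal classes, and so on), and since each of these ranges is far smaller than 230, a pigeonhole argument over the smallest-range coordinate forces any two distinct space-group tuples to coincide on at least one position.

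The main obstacle I anticipate is ruling out the degenerate possibility $s = 0$ in full generality rather than only for a typical pair, since a priori it is conceivable that two labels disagree on every coordinate. To close this cleanly, I would instantiate Assumption 1 with a sub-property family rich enough that the tuple encoding remains injective on all 230 labels while guaranteeing, by a counting argument on the sub-property ranges, that at least one coordinate must always agree. Adding coarse descriptors such as crystal system membership or the presence of an inversion center suffices to force this overlap, after which the partition constructed in the first paragraph meets the proposition's requirements and the proof concludes.
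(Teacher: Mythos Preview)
Your first paragraph is correct and is all that the formal content of the proposition requires: given any two $k$-tuples, the index set $\{1,\dots,k\}$ trivially partitions into agreement indices $P$ and disagreement indices $Q$, with $s=|P|$, $t=|Q|$, and $s+t=k$. This is exactly how the paper treats the proposition: no separate proof is given, and immediately afterward the paper simply \emph{names} the shared block $M=\{y_{p_d}^{truth}\}$ and the non-shared blocks $y_{truth},y_{other}$ as notation for what follows. In that sense your construction matches the paper's (implicit) argument.

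Where you diverge is in attempting to prove $s\geq 1$ in general. Two issues arise. First, the paper does not attempt this: the phrase ``there are some same structure sub-properties'' is motivational rather than a universal claim, and the substantive overlap condition is instead imposed as an explicit \emph{hypothesis} in Proposition~2 (``If there is significant overlap information between labels, \dots\ s.t.\ $I(M;Y_{truth})\gg I(y_{truth};Y_{truth})$''). So the paper sidesteps the question you are trying to answer. Second, your pigeonhole argument does not establish what you want. Knowing that some coordinate has a small range (say $r$ values) tells you that among the 230 labels many must share that coordinate value, but it says nothing about an \emph{arbitrary fixed pair} $(Y_{truth},Y_{other})$: two tuples can differ in every coordinate regardless of how small each coordinate's range is (e.g., $(0,0,\dots,0)$ versus $(1,1,\dots,1)$ with all binary coordinates). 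Your proposed fix---enriching the sub-property family until overlap is forced---would need an additional structural argument (not just a counting one) tying the coordinates together, and at that point you are no longer proving the proposition as stated but engineering the assumption to make it true. The cleaner reading, and the one the paper adopts, is that the partition always exists (your first paragraph), while nontrivial overlap is a premise for the downstream analysis rather than a conclusion here.
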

For simplicity, let $M = \{y_{{p_d}}^{truth}\} ,d\in[1,2,...,s]$ represent the overlap information between $Y_{truth}$ and $Y_{other}$. Let $y_{truth} = \{y_{{q_d}}^{truth}\},d\in[1,2,...,t]$ and $y_{other} = \{y_{{q_d}}^{other}\},d\in[1,2,...,t]$ denote the non-overlapping information between the two labels. Then, we have $Y_{truth} = (M,y_{{truth}})$ and $Y_{other} = (M,y_{{other}})$. Usually, our aim is to have the model maximize the mutual information ${I(E;Y_{truth})}$ between sample representation and truth label during training, where $E$ denotes the representation obtained from the encoder. 

\begin{proposition}
    If there is significant overlap information between labels, maximizing the mutual information between sample representation and truth label will results in the mutual information of sample representation and overlap label being maximized as well, specifically:  
    \label{prop:ol_inf}
    \begin{equation}    
        \begin{aligned}
            I(E;Y_{truth}) &  \uparrow\;\; \Rightarrow \;I(E;Y_{other})\uparrow \\
            & \text{s.t.} \;\; I(M;Y_{truth})\gg  {I(y_{truth};Y_{truth})}.
        \end{aligned}
    \end{equation}
\end{proposition}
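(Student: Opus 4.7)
The plan is to exploit the chain rule for mutual information together with the decompositions $Y_{truth} = (M, y_{truth})$ and $Y_{other} = (M, y_{other})$ already established above. The two decompositions share the component $M$, so I would first use the chain rule to write $I(E;Y_{truth}) = I(E;M) + I(E;y_{truth}\mid M)$ and $I(E;Y_{other}) = I(E;M) + I(E;y_{other}\mid M)$. The common term $I(E;M)$ is the bridge that links the two mutual informations.

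Next, I would restate the hypothesis in an equivalent, easier-to-use form. Because $M$ and $y_{truth}$ are deterministic components of $Y_{truth}$, we have $I(M;Y_{truth}) = H(M)$ and $I(y_{truth};Y_{truth}) = H(y_{truth})$, so the assumption $I(M;Y_{truth}) \gg I(y_{truth};Y_{truth})$ reduces to $H(M) \gg H(y_{truth})$. In other words, the overlap carries almost all of the entropy of the target label, while the distinguishing part $y_{truth}$ is nearly deterministic given the sample distribution.

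With this in hand, I would combine the first chain-rule identity with the elementary bound $I(E;y_{truth}\mid M) \leq H(y_{truth}\mid M) \leq H(y_{truth})$ to obtain the core inequality $I(E;M) \geq I(E;Y_{truth}) - H(y_{truth})$. Non-negativity of conditional mutual information then yields $I(E;Y_{other}) \geq I(E;M) \geq I(E;Y_{truth}) - H(y_{truth})$. Since $H(y_{truth})$ is negligible under the hypothesis, any increase in $I(E;Y_{truth})$ is transmitted almost entirely to $I(E;Y_{other})$, which is precisely the arrow claimed in the statement.

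The main obstacle is formalizing the informal symbols $\uparrow$ and $\gg$. My intention is to interpret $\uparrow$ as the assertion that the lower bound on $I(E;Y_{other})$ tracks $I(E;Y_{truth})$ up to an additive slack of $H(y_{truth})$, and to argue that under $H(M) \gg H(y_{truth})$ this slack is small relative to the range on which $I(E;Y_{truth})$ can vary. A subtler point worth flagging is that the side condition constrains only the truth-side split $(M, y_{truth})$ and says nothing about $y_{other}$; however, this asymmetry is harmless, because the chain of inequalities passes through the shared term $I(E;M)$ and only invokes non-negativity of $I(E;y_{other}\mid M)$, which holds unconditionally.
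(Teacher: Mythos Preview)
Your proposal is correct and follows essentially the same approach as the paper: the chain-rule decompositions $I(E;Y_{truth}) = I(E;M) + I(E;y_{truth}\mid M)$ and $I(E;Y_{other}) = I(E;M) + I(E;y_{other}\mid M)$, the conversion of the hypothesis to $H(M)\gg H(y_{truth})$, and the bound $I(E;y_{truth}\mid M)\le H(y_{truth})$ are exactly the paper's ingredients. If anything, your write-up is a bit more explicit in chaining the inequalities $I(E;Y_{other}) \ge I(E;M) \ge I(E;Y_{truth}) - H(y_{truth})$ and in flagging the asymmetry issue, whereas the paper leaves these as informal remarks.
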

We provide a theoretical analysis in \textbf{\underline{Supplement B.1}}. Proposition \ref{prop:ol_inf} shows, when there is significant overlap in information between labels, simply maximizing the mutual information between the representation and the truth label is insufficient. Thus, we observe the difference between the mutual information, which serves as a reliable indicator of the model's confusion in executing a classification. 
\begin{definition}[Difference $Diff$ in mutual information]
    The difference $Diff$ in mutual information between two labels is equal to the mutual information of their non-overlapping parts.
    \begin{equation}
        \begin{aligned} 
        Diff
        &= I(E;Y_{truth})-I(E;Y_{other})\\
        &= I(E;y_{truth})-I(E;y_{other})
        \end{aligned}
    \label{eqa:Ixy_diff}
    \end{equation}
\end{definition}

We provide the corresponding proof in \textbf{\underline{Supplement B.2}}.

\section{XRDecoupler is A `Nemesis' of \texttt{SPC}}
Here, we propose a novel method for determining space groups from PXRD, called XRDecoupler.
This method includes a superclass-guided optimization framework ({\textbf{for Culprit 2 \& 3}}) and a hierarchical PXRD pattern learning model (\textbf{for Culprit 1}), effectively addressing the significant \texttt{SPC} issues associated with previous methods. 
An overview of XRDecoupler is illustrated in Figure \ref{fig:model}. 

\subsection{Superclass-Guided Optimization ($\blacktriangleright$ Culprit 2 \& 3)}

\subsubsection*{$\blacktriangleright$ Mitigating Information Overlap.} Considering significant information overlap between space group labels, we expect that the model focuses more on label-specific features and devotes less attention to overlapping information during the classification. Thus, we also aim to maximize $Diff$. 
\begin{proposition}
    The process of maximizing $Diff$ can be interpreted as maximizing the mutual information between the sample and each structured sub-property.
    \begin{equation}
        \begin{aligned}
            Diff &= I(E;y_{truth})-I(E;y_{other}) \\
            &= \sum_{i=1}^t{[I(E;y_{i}^{truth})-I(E;y_{i}^{other})]} \\ 
        \end{aligned}
    \end{equation}
\end{proposition}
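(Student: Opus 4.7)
The plan is to combine Assumption~\ref{assump:sub_prop}, which states that any space group label is uniquely represented by $k$ independent structure sub-properties, with the standard additivity of mutual information over independent components. Starting from the identity in Equation~\eqref{eqa:Ixy_diff}, namely $Diff = I(E;y_{truth}) - I(E;y_{other})$, I note that $y_{truth}=(y_{q_1}^{truth},\ldots,y_{q_t}^{truth})$ and $y_{other}=(y_{q_1}^{other},\ldots,y_{q_t}^{other})$ are themselves tuples of $t$ sub-properties (the non-overlapping coordinates after removing the shared block $M$). The whole proposition reduces to showing that each of these two mutual-information terms decomposes additively, after which subtraction pairs them index-by-index.

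First I would rewrite $I(E;y_{truth})=H(y_{truth})-H(y_{truth}\mid E)$ and similarly for $y_{other}$. The marginal term factors immediately from Assumption~\ref{assump:sub_prop}: since the sub-properties are independent, $H(y_{truth})=\sum_{i=1}^{t} H(y_{q_i}^{truth})$, and analogously for $y_{other}$. To split the conditional entropy additively I need conditional independence given $E$, which I would introduce as a mild extension of Assumption~\ref{assump:sub_prop}: in the crystallographic setting each sub-property corresponds to a distinct symmetry element (lattice centering, rotation axis, roto-inversion axis, translation), which are in principle identifiable from different independent features of the PXRD pattern encoded in $E$. Under this assumption, $H(y_{truth}\mid E)=\sum_{i=1}^{t} H(y_{q_i}^{truth}\mid E)$, yielding $I(E;y_{truth})=\sum_{i=1}^{t} I(E;y_{q_i}^{truth})$ and the analogous identity for $y_{other}$. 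Substituting and grouping pairwise then gives $Diff=\sum_{i=1}^{t}\bigl[I(E;y_{q_i}^{truth})-I(E;y_{q_i}^{other})\bigr]$, which (after re-indexing $q_i\mapsto i$) is exactly the statement.

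From the decomposition, the interpretive conclusion is immediate: because each pair $(y_{q_i}^{truth},y_{q_i}^{other})$ lives in the same sub-property coordinate, driving up every $I(E;y_{q_i}^{truth})$ while the distribution of the alternative values $y_{q_i}^{other}$ is bounded is equivalent to maximizing the mutual information between $E$ and each individual sub-property. Thus maximizing $Diff$ is operationally the same as supervising $E$ to capture every structural sub-property, which is the claim the proposition is meant to support for the superclass-guided optimization.

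The main obstacle is the conditional-independence step. Assumption~\ref{assump:sub_prop} only guarantees marginal independence of the sub-properties, so the equality $H(y_{truth}\mid E)=\sum_i H(y_{q_i}^{truth}\mid E)$ does not follow formally — in general one only has the inequality $H(y_{truth}\mid E)\le\sum_i H(y_{q_i}^{truth}\mid E)$, which would weaken Equation~(4) to an upper bound rather than an equality. I would therefore either (i) strengthen Assumption~\ref{assump:sub_prop} to mutual independence (including conditional on $E$), which is natural given that the sub-properties describe physically distinct symmetry operations, or (ii) reformulate the proposition as an upper-bound inequality and argue that maximizing the tractable sum on the right still drives $Diff$ upward. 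A secondary (easier) point is to explicitly verify that the shared block $M$ contributes nothing: since $M$ appears identically in $Y_{truth}$ and $Y_{other}$, its mutual information terms cancel in the definition of $Diff$, justifying why the sum runs only over the $t$ non-overlapping indices.
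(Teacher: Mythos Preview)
Your proposal is correct and reaches the same decomposition as the paper, but via a slightly different technical route. The paper expands $I(E;y_{truth})=I(E;y_1^{truth},\ldots,y_t^{truth})$ using the chain rule of mutual information, obtaining $\sum_{i=1}^{t} I(E;y_i^{truth}\mid y_{i-1}^{truth},\ldots,y_1^{truth})$, and then invokes ``the assumption of independence'' to drop the conditioning and arrive at $\sum_{i=1}^{t} I(E;y_i^{truth})$; you instead write $I=H-H(\cdot\mid E)$ and split each entropy term separately. Both routes require exactly the same extra ingredient you flagged---conditional independence of the sub-properties given $E$ (marginal independence from Assumption~\ref{assump:sub_prop} alone does not give $I(E;y_i\mid y_{<i})=I(E;y_i)$ any more than it gives $H(y_{truth}\mid E)=\sum_i H(y_i^{truth}\mid E)$). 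The paper simply absorbs this into its independence assumption without comment, whereas you make the gap explicit and propose remedies; in that sense your treatment is more careful, but substantively the two arguments are equivalent.
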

We provide the corresponding proof in \textbf{\underline{Supplement B.3}}. From this proposition, we can optimize the model's ability to learn each sub-property to alleviate the confusion phenomenon. However, for the moment, sub-properties are abstractions. Let's discuss how to optimize them.

\subsubsection*{$\blacktriangleright$ Superclass Guidance.}
Superclasses of space groups describe a broad property of crystal structures, which aligns with the idea of structured sub-property mentioned earlier.
Therefore, it is logical to introduce various superclasses of space groups to represent the sub-properties above, and optimize the sub-properties in the same way as optimizing the superclasses. Through the oversight of these superclasses, the model's focus on overlapping information can be minimized, thereby enhancing the efficacy of space group identification. Specifically, we focus on the $T$ types of superclasses of space groups. We use $C$ to represent the space group category, $C^{Sup}_i$ to denote the $i$-th type of superclass category of space groups, $y_i$ to indicate the space group category corresponding to the $i$-th sample, and $y^{Sup_j}_i$ to represent the category of the $j$-th superclass corresponding to the $i$-th sample. We define $Y_i = \{y_i\} \cup \{y^{Sup_j}_i\}_{j \in [T]}$ to represent the set of all categories to which the $i$-th sample belongs. Then, we assume that the classifier for space group classification is $Cls$, and the classifiers for the various superclasses are denoted as $\{ \text{Cls}^{Sup_t} \}_{t \in [T]}$. Our optimization objective is given by:    
\begin{equation} \small
\begin{aligned}
    &\min {-I(E;y)-\beta\sum_{t\in[T]}{I(E;y^{Sup_t})}} \\
    &\Leftrightarrow \min{\mathbb{E}_{(X,Y)}{ -\log p(y|E)-\sum_{t\in [T]}\log p(y^{Sup_t}|E) } }\\
    &\Leftrightarrow \min {L_{sp}(\text{Cls}(f(X)), y) + \beta \sum_{t \in [T]} L^{t}(\text{Cls}^{Sup_t}(f(X)), y^{Sup_t})},
\end{aligned}
\end{equation}
where $L_{sp}(\cdot)$ and $L^t(\cdot)$ are the cross-entropy loss, $f$ is an encoder satisfying $E = f(X)$, and $\beta$ is a hyperparameter. 

\subsubsection*{$\blacktriangleright$ Example.} In the confusion cases illustrated in Figure \ref{fig:confusion}, we clarify the representations of samples from the \texttt{I4} \& \texttt{P4} categories, as well as \texttt{I}$\overline{\texttt{4}}$ \& \texttt{P}$\overline{\texttt{4}}$, by introducing the Bravais lattice type superclass as supervision. Additionally, we use the point group type superclass to differentiate between \texttt{I4} \& \texttt{I}$\overline{\texttt{4}}$, and \texttt{P4} \& \texttt{P}$\overline{\texttt{4}}$ samples, thereby enhancing the discriminative capability of these four space groups.

\subsubsection*{$\blacktriangleright$ Optimization Process.} Following the introduction of superclasses, each one provides a multitude of detailed supervision signals for the model's optimization process. Nevertheless, as illustrated in Figure \ref{fig:Optimization}(left), Culprit 2 remains unresolved, with a persistent skew in the optimization process. In this process, the model's loss on certain superclasses initially decreases, only to be followed by the gradual optimization of the remaining superclasses once these have been refined. This skewed optimization process is detrimental to enhancing the model's generalization performance and also impacts the benefits derived from the superclasses. Therefore, we have transformed the optimization process into a multi-objective optimization process. We consider the space group and the $T$ superclasses as $T+1$ independent classification objectives, thereby optimizing the model's learning as a multi-objective optimization, i.e., 
\begin{equation}\small
    \begin{aligned}
    & \min_{W}\hat{L}  (W^{Enc},W^{Sup}_1,...,W^{Sup}_{T+1}) \\
    & =\min_W(L^1(W^{Enc},W^{Sup}_1),...,L^{T+1}(W^{Enc},W^{Sup}_{T+1}))
    \end{aligned}
\end{equation}
where $W^{Enc}$ represents the parameters of the encoder, $W^{Sup}_t$ represents the parameters of the classifier for each class and $W = \{W^{Enc} \} \bigcup \{W^{Sup}_t\}_{t\in [T+1]}$ represents all parameters.
The goal of multi-objective optimization is achieving Pareto optimality.
\begin{definition}[Solution $W$]
A solution $W$ dominates a solution $\overline{W}$ if $\hat{L}(W^{Enc},W^{Sup}_t) \leq \hat{L}({\overline{W}}^{Enc},\overline{W}^{Sup}_t)$ for all objectives $t \in [T+1]$ and $\hat{L}(W^{Enc},W^{Sup}_1,W^{Sup}_2,...,W^{Sup}_{T+1}) \neq \hat{L}(\overline{W}^{Enc},\overline{W}^{Sup}_1,\overline{W}^{Sup}_2,...,\overline{W}^{Sup}_{T+1}) $.
\end{definition}
\begin{definition}[Solution $W^*$]
A solution $W^*$ is Pareto optimal if there exists no solution $W$ that dominates $W^*$.
\end{definition} 
Therefore, the optimization process of space group and superclass is transformed into finding a Pareto optimal solution $W^*=(W^{Enc^*},{W_1^{{Sup}^*}},...,{W_t^{{Sup}^*}})$. Inspired by MGDA-UB\citep{sener2018multi}, we further transform the optimization problem into solving a set of weights $\{\alpha_i\}$:
\begin{equation}\small
    \begin{aligned}
        \min_{\alpha_1,\alpha_2,...,\alpha_{T+1}} ||\sum^{T+1}_{t=1}\alpha_t\nabla_{W^{Enc}}L^t(W^{Enc},W^{Sup}_t)||^2_2, \\
        \text{s.t.}  \quad \sum^{T+1}_{t=1}\alpha_t=1,\alpha_t>=0 \;\;\forall t.
    \end{aligned}
\end{equation}
where $\alpha_t$ denotes the weight of the $t$-th objective.

Then, we use the FRANK-WOLFESOLVER algorithm (FW) to obtain a gradient direction that improves all $T+1$ classification tasks. The optimization process is as follows.
\begin{equation}\small
    \begin{aligned}
        (1) & W^{Sup}_t = W^{Sup}_t - \eta  \nabla_{W^{Sup}_t} L(W^{Enc},W^{Sup}_t) \forall t \in [T+1]\\ 
        (2) & \alpha_1,...,\alpha_t = FW(W^{Enc},{\{W^{Sup}_t\}}_{t\in [T+1]})\\
        (3) & W^{Enc} = W^{Enc}-\eta \sum^{T+1}_{t=1}\alpha_t\nabla_{W^{Enc}}L^t(W^{Enc},W^{Sup}_t)\\
    \end{aligned}
\end{equation}
where $\eta$ represents the learning rate.

\subsection{Hierarchical PXRD Pattern Learning ($\blacktriangleright$ Culprit 1)}
The introduction of superclasses brings a wealth of structural knowledge to the model, including point symmetry, crystal structure periodicity, lattice structures, and other rich information. This encompasses global and local information about the crystal structure, raising the bar for the encoder to learn more refined knowledge. As Culprit 3 mentioned, the original models struggle to capture such fine-grained knowledge and are unable to learn detailed representations that incorporate these specific insights. 

Therefore, we propose a new hierarchical Pattern learning model tailored to the characteristics of space groups. This model consists of two main components: (i) Local Pattern module (LP), which captures local information between adjacent peaks in the PXRD pattern, outputting a $k_{local}$-dimensional representation $E_{local}$; (ii) Global Pattern module (GP), which captures global information from the PXRD pattern, outputting a $k_{global}$-dimensional representation $E_{global}$. We combine these two representations to obtain the final output representation of the model:
\begin{equation}
    E = \text{Concat}(E_{global}, E_{local})
\end{equation}
This representation is used for the subsequent superclass-guided optimization process.

\makeatletter
\newcommand{\tabcaption}[1]{\def\@captype{table}\caption{#1}}
\makeatother
\begin{figure*}[t!]
\centering
    \begin{minipage}{0.62\textwidth}
    \centering
    \scalebox{0.70}{
        \begin{tabular}{l  c c c c   c c c }
            \toprule[2pt]
            \multicolumn{1}{l}{\multirow{2}{*}{Method}} & \multicolumn{3}{c}{Accuracy (\%) on MOF} & & \multicolumn{1}{c}{\multirow{2}{*}{F1 Score (\%)}} & & \multicolumn{1}{c}{\multirow{2}{*}{Recall (\%)}} \\
            \cmidrule{2-4}
            & Top-1 & Top-2 & Top-5 & & & &\\
            \midrule
            MLP        & 9.10  {($-$29.90)}  & 15.10 {($-$41.30)}  & 30.10 {($-$47.48)}  && 6.43  {($-$16.17)} && 5.24  {($-$16.26)} \\
            CNN        & 39.00 {($+$0.00)}  & 56.40 {($+$0.00)}  & 77.58 {($+$0.00)}  && 22.60 {($+$0.00)} && 21.50 {($+$0.00)} \\
            \midrule
            NoPoolCNN  & 38.20 {($-$0.80)}   & 51.80 {($-$4.60)}   & 71.12 {($-$6.46)}   && 34.47 {($+$11.87)} && 31.84 {($+$10.34)} \\
            RCNet           & 59.00 {($+$20.00)} & 73.70 {($+$17.30)} &   88.37 {($+$10.79)}    &&   41.29 {($+$18.69)}    && 40.38 {($+$18.80)}  \\
            XRDMamba         & \underline{72.20} {($+$33.20)} & \underline{85.20} {($+$28.80)} & \underline{93.42} {($+$15.84)} && \underline{47.59} {($+$24.99)} && \underline{46.00} {($+$24.50)} \\
            \midrule
            \rowcolor{gray!10} XRDecoupler    & \textbf{80.09} {($+$41.09)} & \textbf{90.11} {($+$33.71)} & \textbf{96.26} {($+$18.68)} && \textbf{56.72} {($+$34.12)} && \textbf{55.18} {($+$33.68)} \\
            \bottomrule[2pt]
        \end{tabular}
    }
    \end{minipage}
    \hspace{0.0\textwidth}
    \begin{minipage}{0.33\textwidth}
    \centering
    \scalebox{0.70}{
        \begin{tabular}{|c c c }
            \toprule[2pt]
            \multicolumn{3}{|c}{Accuracy (\%) on MOF-Balanced} \\
            \cmidrule{1-3}
             Top-1 & Top-2 & Top-5\\
            \midrule
             4.10  {($-$18.8)}  & 5.40 {($-$27.00)}  & 8.21 {($-$38.79)}  \\
             22.90 {($+$0.00)}  & 32.40 {($+$0.00)}  & 47.00 {($+$0.00)}  \\
            \midrule
             33.80 {($+$10.90)}   & 40.70 {($+$8.30)}   & 50.97 {($+$3.97)} \\
             44.50 {($+$21.60)} & 55.50 {($+$23.10)} &   69.12 {($+$22.12)}    \\
             \underline{48.70} {($+$25.80)} & \underline{61.70} {($+$29.3)} & \underline{74.83} {($+$27.83)}  \\
            \midrule
            \rowcolor{gray!10}  \textbf{58.87} {($+$35.97)} & \textbf{72.42} {($+$40.02)} & \textbf{85.22} {($+$38.22)}  \\
            \bottomrule[2pt]
        \end{tabular}
    }
    \end{minipage}
    \tabcaption{Evaluation on MOF subset (left) and MOF-Balanced subset (right) of CCDC dataset with SOTA methods. \textbf{Bold} indicates the best performance while \underline{underline} indicates the second best. {\textbf{($+$)}} and {\textbf{($-$)}} indicate the the relative gain with CNN. }
    \label{tbl:main_MOF}
\end{figure*}

\subsubsection*{$\blacktriangleright$ Local Pattern Module.}
The input vectors of the local pattern module are $A$ and $I$, which represent the diffraction angles and intensities on the PXRD pattern, respectively. 
According to Bragg's law\citep{pope1997x}, $2d \sin(\theta) = n \lambda$, the interplanar spacing is inversely related to $\sin(\theta)$. Thus, we replace $A$ with $\sin(A)$ and concatenate it with the peak intensity $I$ to form an input sequence with $\text{channel} = 2$.

Next, we employ several residual 1D convolution blocks (a kernel size of 3 and a stride of 1) to capture the correlations between adjacent peaks in the PXRD pattern. After the convolution process, we flatten the features and project them to obtain the local pattern representation $E_{local}$. Therefore, the representation process of the local pattern module can be formalized as:
\begin{equation}\small
    E_{local} = \text{Projection}\left(\text{Flatten}\left(\text{ConvBlocks}(\sin(A) \, \oplus \, I)\right)\right)
\end{equation}
where \text{Projection} refers to a linear projection, \text{Flatten} denotes the flattening operation, and \text{ConvBlock} is a submodule composed of convolutional layers, while $\oplus$ represents the concatenation operation. 

\subsubsection*{$\blacktriangleright$ Global Pattern Module.}
The global pattern module first segments the peak intensity data $I$ into $L_p$ consecutive patches of length $p$. Each patch $I_p$ is then processed by convolution blocks to capture local peak correlations, yielding a peak intensity representation, $e_{intensity} \in \mathbb{R}^{k_{intensity}}$. Next, we assign a learnable feature $e_{learnable} \in \mathbb{R}^{k_{learnable}}$ for each patch, allowing the model to adaptively learn representations that characterize the patch. After setting the positional encoding \( e_{position} \) for each patch, we concatenate the peak intensity features and the learnable features, and then add \( e_{position} \) to form the final representation of the patch. Subsequently, we use several attention modules to learn these features, enabling the model to capture the correlations between any two patches and providing more global information. After that, we apply a global pooling layer to obtain the final global representation $E_{global}$.
 
However, we experimentally found that some patches obtained from the PXRD pattern have peak intensities that are entirely zero. These patches hold no value for the model during the learning process and may even have a negative effect on other patches. Therefore, during the calculation of attention, we apply a masking process to these patches, defined as:
\begin{equation}
    e_{mask} = (\max(I_p) == 0).
\end{equation}
This ensures that patches with all zero intensities do not contribute to the attention calculations.  
 
Therefore, the representation process of the global pattern module can be formalized as:
\begin{equation}
\begin{aligned}
    E_{global} &=  \text{GlobalPooling}(\text{AttentionBlocks}(\\
    &(e_{intensity} \oplus e_{learnable}) + e_{postion}, e_{mask}))
\end{aligned} 
\end{equation}
Here, \( e_{intensity} \) represents the peak intensity features of the patch, \( e_{learnable} \) denotes the learnable features for the patch, \( e_{position} \) represents the positional features of the patch, and \( e_{mask} \) indicates whether each patch is masked.

\section{Experiments}
For more detailed information, please refer to \textbf{supplementary material in the extended version} of our paper.

\subsubsection*{$\blacktriangleright$ Dataset and Baselines.} We use the MOF dataset as our main dataset as \citep{yu2024xrdmamba}, which consists of over 280,000 metal-organic frameworks (MOFs) \citep{furukawa2013chemistry,james2003metal,zhou2012introduction} from the Cambridge Crystallographic Data Centre (CCDC) \citep{allen1979cambridge} for our experiments. Furthermore, we were successful in acquiring two additional datasets, CoREMOF \citep{chung2019advances} and InorganicData \citep{salgado2023automated}, to assist in verifying the effectiveness of our method. We provide the details of datasets and processing in \textbf{\underline{Supplement C}}. To ensure fairness in the experiments, we selected several SOTA space group prediction models as our baselines, including {MLP}~\citep{salgado2023automated}, {CNN}~\citep{salgado2023automated}, {NoPoolCNN}~\citep{salgado2023automated}, {RCNet}~\citep{chen2024crystal}, and {XRDMamba}~\citep{yu2024xrdmamba}. Detailed descriptions are provided in \textbf{\underline{Supplement D}}.
 

\subsubsection*{$\blacktriangleright$ Benchmark Results.} 
As shown in Table \ref{tbl:main_MOF}, we outperform all baselines on both the MOF (left) and MOF-Balanced (right) test sets, surpassing the previous SOTA (XRDMamba). These results demonstrate our method effectively alleviates \texttt{SPC} and establishes a new SOTA. More results on CoREMOF and InorganicData are provided in \textbf{\underline{Supplement F.1}}.

\begin{figure*}[t!]
    \centering
    \begin{subfigure}{0.35\linewidth}
        \centering
        \includegraphics[width=\linewidth]{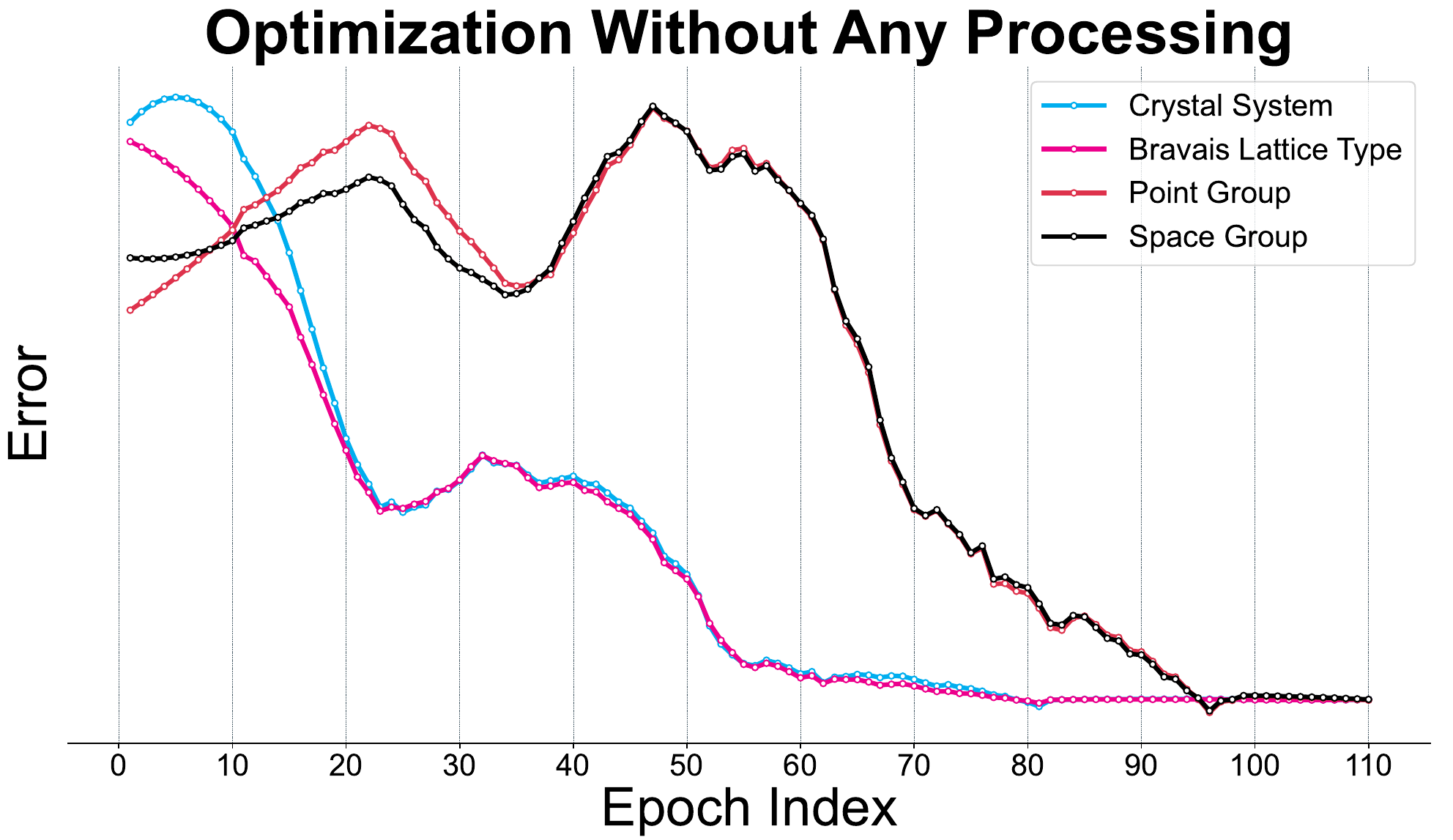} 
    \end{subfigure}
    \begin{subfigure}{0.35\linewidth}
        \centering
        \includegraphics[width=\linewidth]{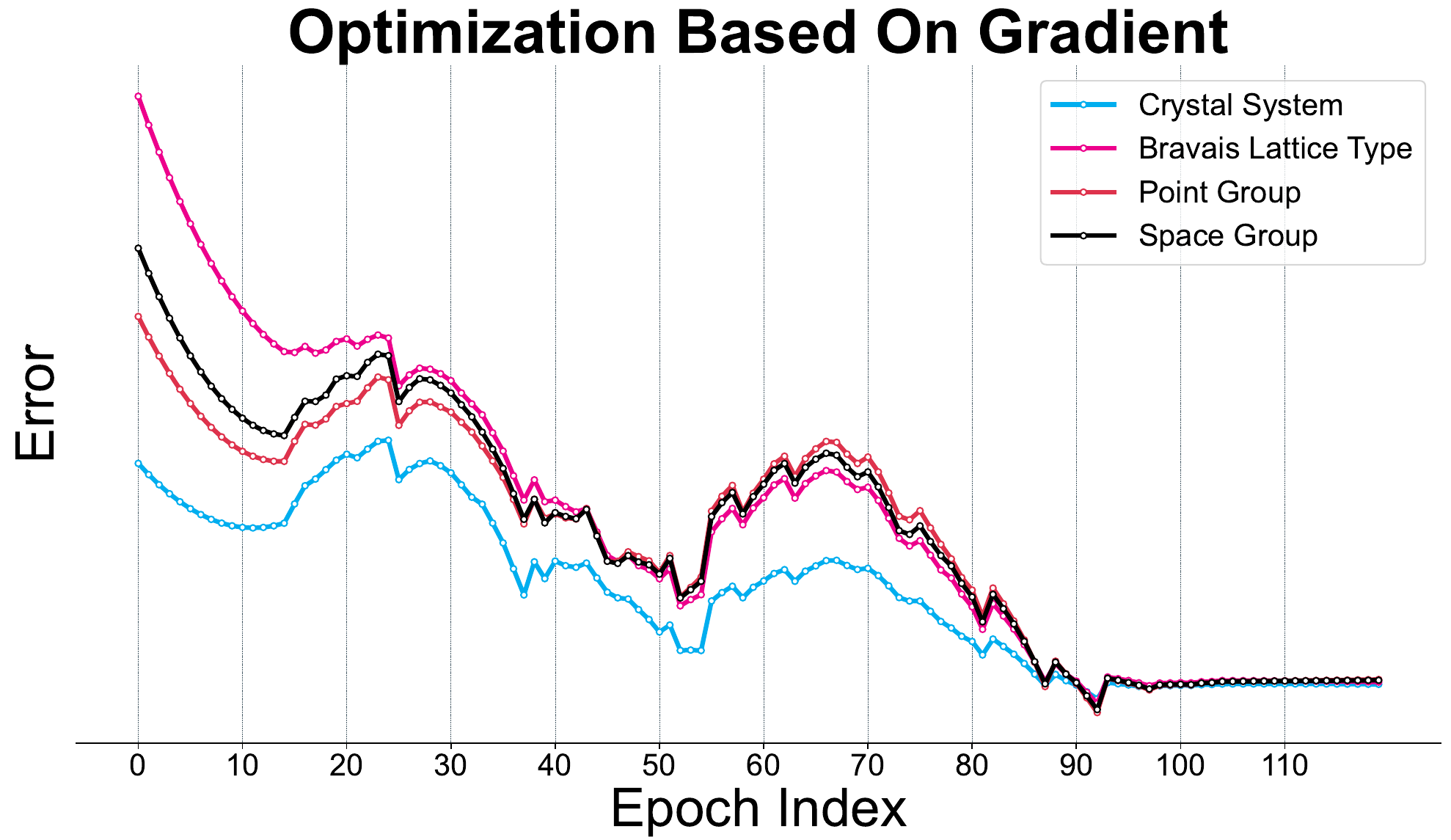} 
    \end{subfigure}
    \hspace{0.02\textwidth}
    \begin{subfigure}{0.25\linewidth}
        \centering
        \includegraphics[width=\linewidth]{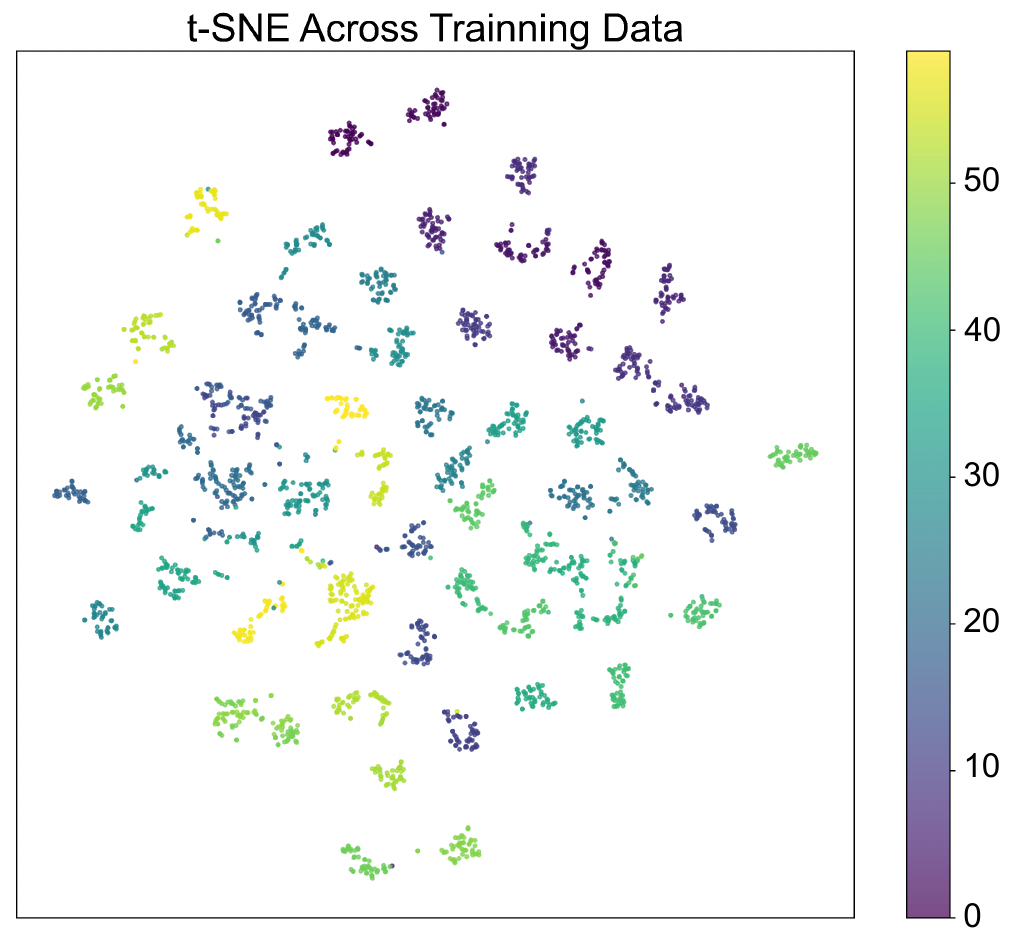}
    \end{subfigure}
    \caption{Trend of the model's training loss. (left) The conventional optimization process of the model on the space group and superclass. (middle) The optimization process of the model on the space group and superclass after introducing the gradient-based optimization method. (right) T-SNE Visualization Analysis of XRDecoupler in the training set.}\vspace{-0.3cm}
    \label{fig:Optimization}
\end{figure*}

\subsubsection*{$\blacktriangleright$ Effectiveness of Gradient-based Optimization.}
Figure \ref{fig:Optimization}(a) shows the loss descent curves for space group and superclasses when the gradient-based multi-objective optimization method is not employed. We can observe that the model initially optimizes the coarse-grained Bravais lattice types and crystal system types. Only after the losses for these superclasses drop to a low level does the model begin to optimize the space groups and point groups. 
Figure \ref{fig:Optimization}(b) presents the loss descent curves for each superclass and space group after introducing the gradient-based multi-objective optimization method. Here, we can see that the optimization directions for each class become consistent, allowing the model to simultaneously learn structural knowledge across multiple dimensions. 

\begin{figure}[t]
    \centering
    \includegraphics[width=0.9\linewidth]{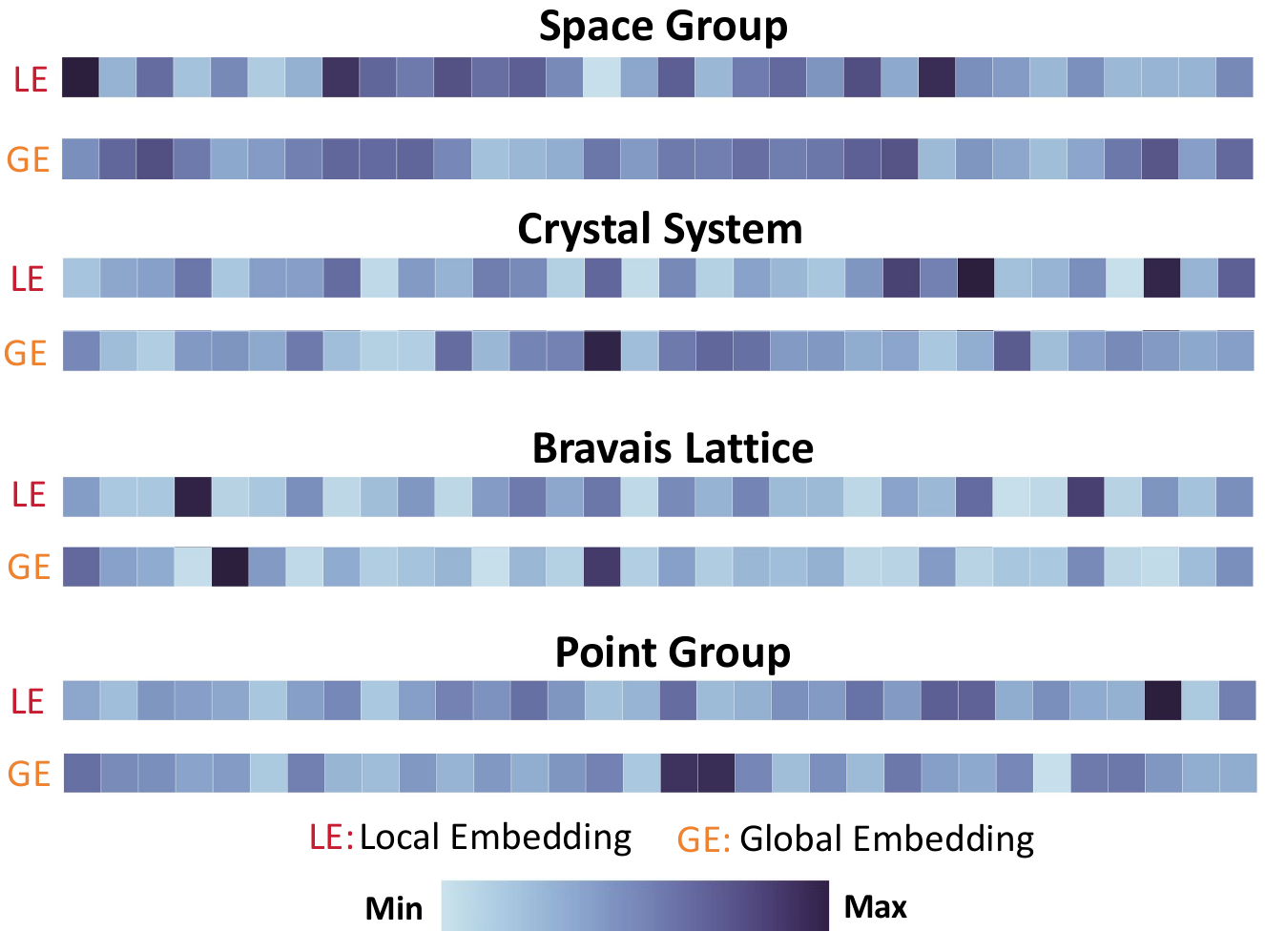}
    \caption{Visualization of the impact of global and local representations on the decision of each superclass.}\vspace{-0.3cm}
    \label{fig:heatmap_feature}
\end{figure}

\subsubsection*{$\blacktriangleright$ T-SNE Visualization Analysis of Confusion Phenomena.}
Figure \ref{fig:Optimization}(right) displays the t-SNE plot of over 3,000 samples randomly sampled from 60 classes in the training set. The figure clearly shows a distinct clustering of features, with noticeable spacing between samples from different space group categories, indicating the superior performance of our model on the training set. We also provide more t-SNE visualization analysis in \textbf{\underline{Supplement F.2}}.

\subsubsection*{$\blacktriangleright$ Dependencies between Representations and Superclasses.}  
To validate the roles of global and local representations, we explored their interdependencies with superclass tasks during classification. We analyzed the classifier weights for space groups and each superclass, observing the importance of different representation locations for the classifier's decision-making by examining the distribution of these weights. The relevant visualizations are shown in Figure \ref{fig:heatmap_feature}. Therefore, we can conclude that in XRDecoupler, both the global and local modules play significant roles and exhibit notable interdependencies across superclass tasks.

\begin{figure}[t]
    \centering
    \includegraphics[width=0.9\linewidth]{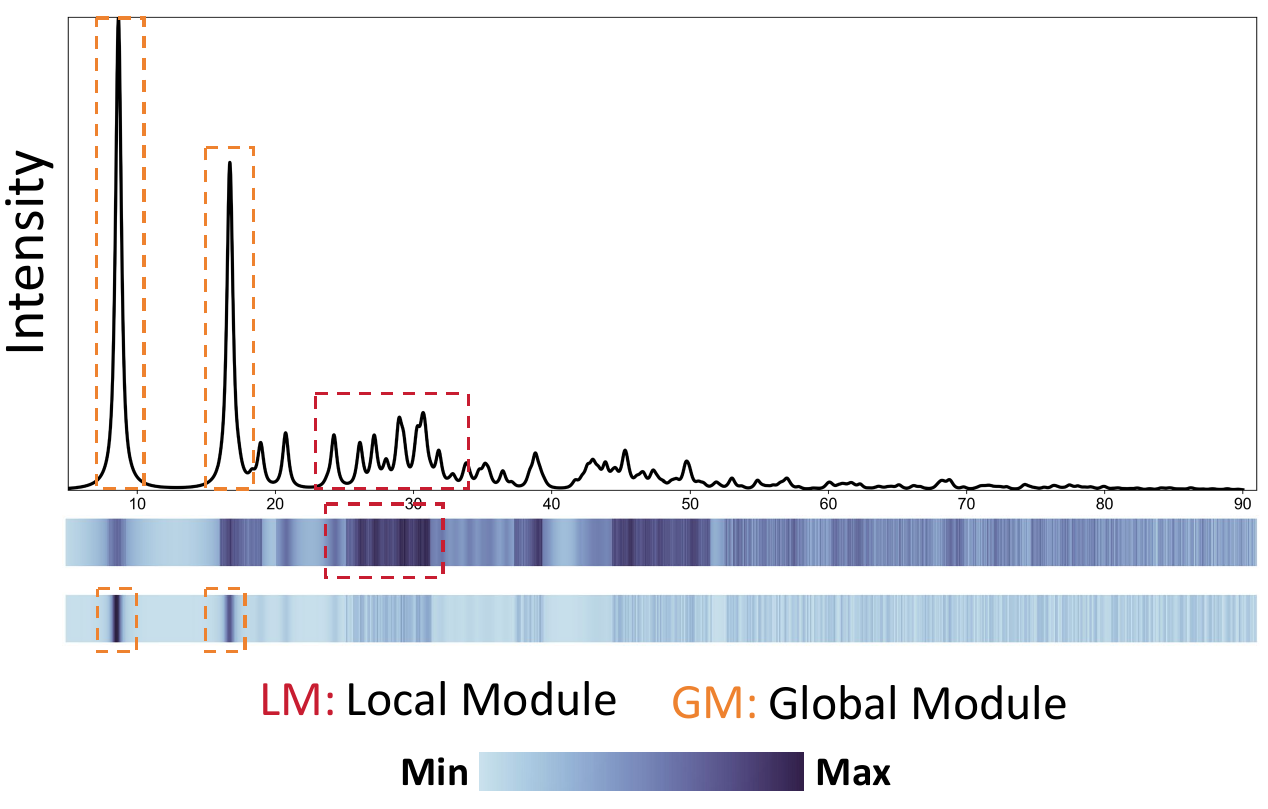}
    \caption{Visualization of the attention of the global and local pattern module to each region in the input PXRD pattern.}
    \label{fig:heatmap_pxrd}
\end{figure}

\subsubsection*{$\blacktriangleright$ Focus Tendency of Global \& Local Pattern Modules.}
To explore the roles of the global and local modules, we visualized their attention levels in Figure \ref{fig:heatmap_pxrd}. Specifically, we used Grad-CAM \citep{selvaraju2017grad,zeiler2014visualizing,pope2019explainability} for the local module's focus and analyzed the self-attention matrix for the global module. Figure \ref{fig:heatmap_pxrd} shows the local module focuses on local variations between peaks (red dashed box), while the global module attends to patches with high peaks (orange dashed box). Thus, the two modules capture distinct fine-grained information from the PXRD pattern, which is key to XRDecoupler's ability to learn detailed representations and achieve superior performance.

\begin{table}[t]
    \centering
    \scalebox{0.7}{
        \begin{tabular}{l c c c }
            \toprule[2pt]
            \multicolumn{1}{l}{\multirow{2}{*}{Method}} & \multicolumn{3}{c}{Accuracy (\%)} \\
            \cmidrule{2-4}
            & Top-1 & Top-2 & F1 Score\\
            \midrule
            MLP        & 15.50  {($-$14.10)}  & 21.4 {($-$23.20)}  & 8.5 {($+$0.8)}  \\
            CNN        & 29.60 {($+$0.00)}  & 44.60 {($+$0.00)}  & 7.70 {($+$0.00)}  \\
            \midrule
            NoPoolCNN  & 30.40 {($+$0.8)}   & 41.60 {($-$3.00)}   & 15.90 {($+$8.2)} \\
            RCNet          & 41.70 {($+$12.10)} & 52.40 {($+$7.80)} &   19.40 {($+$11.70)}   \\
            XRDMamba        & {54.50} {($+$24.90)} & {64.70} {($+$20.10)} & {24.10} {($+$16.40)}  \\
            \midrule
            \rowcolor{gray!10} XRDecoupler         & \textbf{60.22} {($+$30.62)} & \textbf{70.45} {($+$25.85)} & \textbf{29.09} {($+$21.39)}  \\
            \bottomrule[2pt]\vspace{-0.3cm}
        \end{tabular}}
        \tabcaption{Generalization analysis on the inorganic dataset with SOTA methods. \textbf{Bold} indicates the best performance while {\textbf{($+$)}} and {\textbf{($-$)}} indicate the the relative gain with CNN.}
    \label{tbl:inorganic}
\end{table}

\subsubsection*{$\blacktriangleright$ Generalization Analysis.}  We conducted a generalization test using 8,000 inorganic crystal data obtained from \citep{salgado2023automated}, which encompasses 178 space group categories. According to the theoretical logic of symmetry classification, inorganic crystals and MOF crystals are analogous, differing primarily in their building units, types of chemical bonds, topological structures, and pore structures. Therefore, inorganic crystals represent out-of-domain data for us, and the model's performance on this data serves as a good measure of its generalization capability. As shown in Table \ref{tbl:inorganic}, our method outperforms the SOTA methods, which indicates that XRDecoupler significantly enhances generalization performance on out-of-domain data. 

\subsubsection*{$\blacktriangleright$ Further analysis.} Please see \textbf{\underline{Supplement F}} to find more analysis, including ablation study, crystal scale adaptability, and case analyses.

\section{Conclusion}
In this paper, we present the XRDecoupler framework, which efficiently determines the symmetry of crystalline materials. This work advances the application of deep learning in crystalline materials analysis, providing a novel methodology for symmetry identification.

\bibliography{aaai2026}

\clearpage
\appendix
\twocolumn[ 
  \centering 
  {\huge\textbf{Supplementary Material}} \\ 
    \vspace{1em} 
    {\large\textbf{Rethinking Crystal Symmetry Prediction: A Decoupled Perspective}}
  \vspace*{1.5em} 

  \begin{quote} 
    \noindent 
    The content of the \textbf{Supplement} is summarized as follows: 
\begin{enumerate}
    \itemsep0em
   \item In Sec.~\ref{asec:d&b}, we provide detailed explanations for the partial definitions and background involved in the method in this paper for easy understanding.
   \item In Sec.~\ref{asec:proof}, we state the proofs of  Proposition 2, Definition 1, and Proposition 3 mentioned in the paper.  
   \item In Sec.~\ref{asec:dataset} and Sec.~\ref{asec:baseline}, we demonstrate the details of datasets and baselines we use in experiments of XRDecoupler.
   \item In Sec.~\ref{asec:id}, we provide implementation details of XRDecoupler.
   \item In Sec.~\ref{asec:exp}, we illustrate more detailed empirical results and analyses of XRDecoupler.
   \item In Sec.~\ref{asec:l&f}, we analyze some of the limitations of XRDecoupler and give directions for future research.  
   \item In Sec.~\ref{asec:rw}, we summarize existing Crystalline Space Group Identification and Superclass Learning methods and explicitly illustrate the novelty of XRDecoupler.
\end{enumerate}
  \end{quote}
  \vspace*{1em} 
] 

%
%
%
%



\urlstyle{rm} 
\def\UrlFont{\rm}  
\frenchspacing  
\setlength{\pdfpagewidth}{8.5in} 
\setlength{\pdfpageheight}{11in} 


\lstset{%
	basicstyle={\footnotesize\ttfamily},
	numbers=left,numberstyle=\footnotesize,xleftmargin=2em,
	aboveskip=0pt,belowskip=0pt,
	showstringspaces=false,tabsize=2,breaklines=true}
\floatstyle{ruled}
\newfloat{listing}{tb}{lst}{}
\floatname{listing}{Listing}

\pdfinfo{
/TemplateVersion (2026.1)
}

\setcounter{secnumdepth}{2} 

\ifdefined\aaaianonymous
    \title{AAAI 2026 Supplementary Material\\Anonymous Submission}
\else
    \title{AAAI 2026 Supplementary Material\\Camera Ready}
\fi
 
\ifdefined\aaaianonymous
\author{
    Anonymous Submission
}
\affiliations{
}
\else
\fi






\section{Definition and Background}\label{asec:d&b}
\subsection{\textbf{Definition of Space Group}}
In crystallography, the space group serves as a classification method for crystal structures, describing the lattice types and symmetry operations that the crystal structure satisfies. For example, the well-known crystal structure of diamond\citep{liopo2018analysis} belongs to the space group \texttt{Fd}$\overline{\texttt{3}}$\texttt{m} (No. 227), where \texttt{F} denotes the face-centered lattice type, \texttt{d} represents the glide plane, $\overline{\texttt{3}}$ indicates threefold rotational inversion symmetry, and \texttt{m} signifies mirror symmetry. Therefore, space group identification is a relatively fine-grained classification task that requires determining both the lattice type and various symmetry operations (including point symmetry, translational symmetry, glide planes, and screw axis combinations), imposing high classification standards. 

\subsection{\textbf{Classification Systems in Crystallography}}
As shown in  \textbf{\underline{Figure \ref{fig:cls_tpye}}}, in this structure, a solid arrow indicates that the former is a superclass of the latter, while a dashed arrow represents the correspondence between different classification systems. This structure illustrates the classification system in crystallography from macro to micro levels. The dashed boxes indicate the superclass system that we used in this study.
\begin{figure}[t!]
    \centering
    \includegraphics[width= 0.7\linewidth]{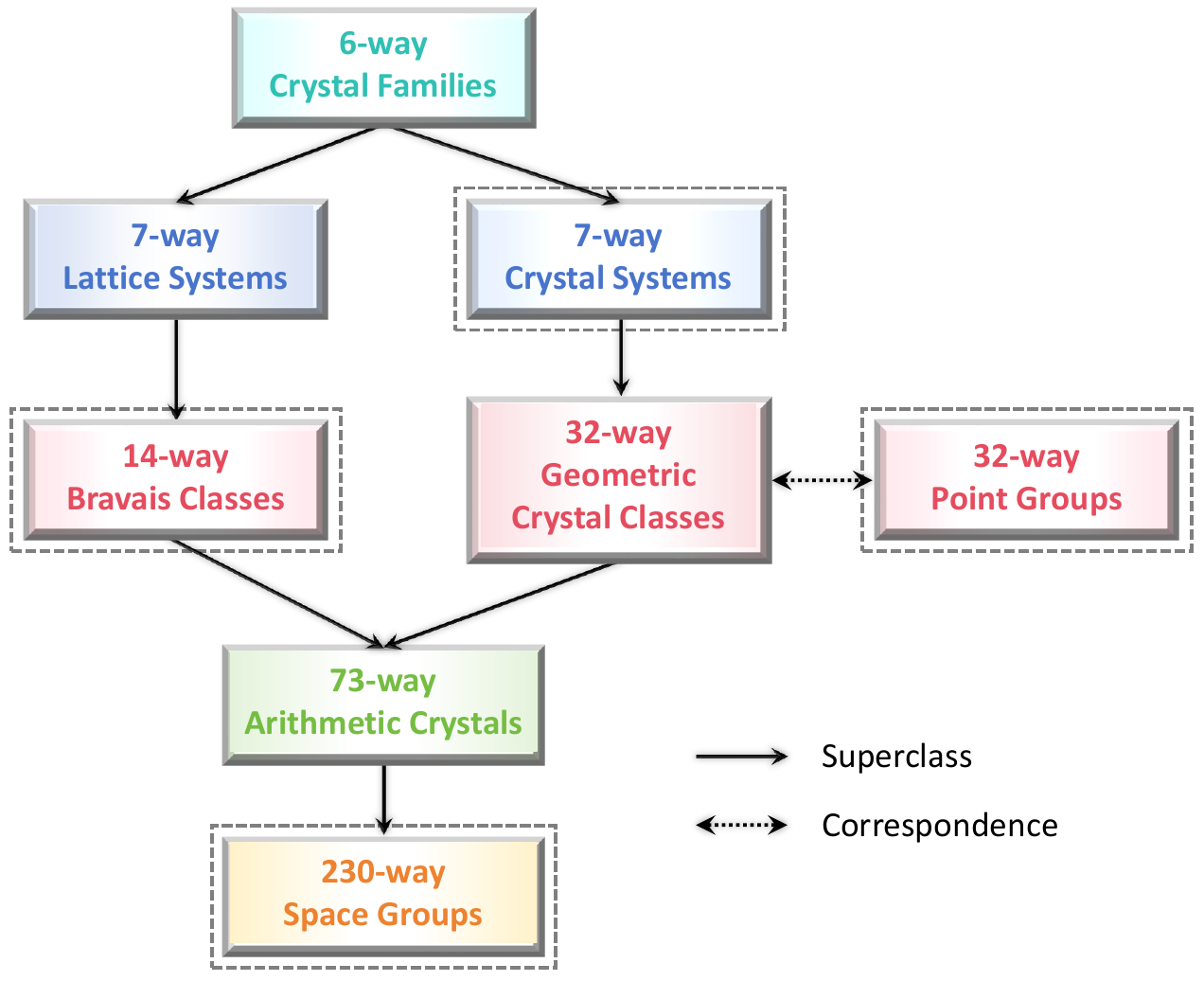}
    \caption{Hierarchical structures and correspondences of classification systems in crystallography. }
    \label{fig:cls_tpye}
\end{figure}

\subsection{\textbf{Background of Superclass Mechanism}}
The superclass, also known as the parent class, is a relative concept in classification systems. It refers to a more general class in a hierarchical classification structure that contains other classes (called subclasses). Specifically, let $S$ represent the set of all classified objects, $C^{1}$ denote one classification method with $n_1$ classes for the set $S$, and $C^{2}$ denote another classification method with $n_2$ classes for the set $S$, such that $S = \bigcup\limits^{i \in [n_1]} C^{1}_i = \bigcup\limits^{j \in [n_2]} C^{2}_j $. If $C^{1}$ is a superclass of $C^{2}$, then for any $j \in [n_2]$, there exists $i \in [n_1]$ such that $C^{2}_j \subseteq C^{1}_i$.

\section{Proof}\label{asec:proof}
\subsection{Proof of Proposition 2}
\setcounter{proposition}{1}

\begin{proposition}[Proposed in paper] 
If there is significant overlap information between labels, maximizing the mutual information between sample representation and truth label will results in the mutual information of sample representation and overlap label being maximized as well, specifically:  
    \begin{equation} 
        \begin{aligned}
           I(E;Y_{truth}) & \uparrow\;\; \Rightarrow \;\;I(E;Y_{other})\uparrow   \\
           & \text{s.t.} \;\; I(M;Y_{truth})\gg  {I(y_{truth};Y_{truth})}.
        \end{aligned} 
    \end{equation}
\end{proposition}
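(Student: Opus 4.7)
The plan is to exploit the chain-rule decomposition of mutual information along the factorizations $Y_{truth} = (M, y_{truth})$ and $Y_{other} = (M, y_{other})$ established earlier in the paper. First I would write
\begin{equation*}
I(E; Y_{truth}) = I(E; M) + I(E; y_{truth} \mid M), \qquad I(E; Y_{other}) = I(E; M) + I(E; y_{other} \mid M),
\end{equation*}
which isolates $I(E; M)$ as the shared quantity that couples the two mutual informations. Any transfer of ``information'' from $I(E; Y_{truth})$ to $I(E; Y_{other})$ must travel through this overlap term.

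Next I would translate the dominance hypothesis into a bound on the conditional terms. Because both $M$ and $y_{truth}$ are deterministic functions of $Y_{truth}$, we have $I(M; Y_{truth}) = H(M)$ and $I(y_{truth}; Y_{truth}) = H(y_{truth})$, so the hypothesis $I(M;Y_{truth}) \gg I(y_{truth};Y_{truth})$ collapses to the cleaner statement $H(M) \gg H(y_{truth})$. The conditional term is then controlled by
\begin{equation*}
I(E; y_{truth} \mid M) \;\leq\; H(y_{truth} \mid M) \;\leq\; H(y_{truth}),
\end{equation*}
and therefore $I(E; M) \geq I(E; Y_{truth}) - H(y_{truth})$. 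In other words, once the ``residual'' ceiling $H(y_{truth})$ is small, virtually every nat pushed into $I(E; Y_{truth})$ must land in $I(E; M)$.

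Combining the two observations gives the proposition. From the decomposition of $I(E;Y_{other})$ and non-negativity of conditional mutual information, $I(E; Y_{other}) \geq I(E; M) \geq I(E; Y_{truth}) - H(y_{truth})$. Under the dominance hypothesis the subtracted term is negligible relative to $I(E; Y_{truth})$, so any increase in $I(E; Y_{truth})$ forces a comparable increase (up to a vanishing slack) in $I(E; Y_{other})$, which is exactly the implication claimed. Assumption 1 on the independence of sub-properties enters here to justify the clean additive decomposition $H(Y_{truth}) = H(M) + H(y_{truth})$ and to make the conditional bound tight.

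The main obstacle is giving rigorous meaning to the informal ``$\uparrow$'' and ``$\gg$'' symbols, which are written qualitatively in the statement. I would therefore fix a quantitative version of the hypothesis, for example $H(y_{truth}) \leq \delta\, H(M)$ for some small $\delta > 0$, and rephrase the conclusion as the explicit lower bound $I(E; Y_{other}) \geq (1-\epsilon)\, I(E; Y_{truth})$ with $\epsilon$ an explicit function of $\delta$. A secondary subtlety is that the argument is a statement about a single joint distribution on $(E, M, y_{truth}, y_{other})$, so I would verify up front that the encoder and the labels live in a common probability space before invoking the chain rule.
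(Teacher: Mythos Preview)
Your proposal is correct and follows essentially the same route as the paper: apply the chain rule to write $I(E;Y_{truth})=I(E;M)+I(E;y_{truth}\mid M)$ and $I(E;Y_{other})=I(E;M)+I(E;y_{other}\mid M)$, bound the conditional term by $H(y_{truth})$, convert the dominance hypothesis into $H(M)\gg H(y_{truth})$, and conclude that pushing up $I(E;Y_{truth})$ must push up $I(E;M)$ and hence $I(E;Y_{other})$. Your additional remarks on quantifying ``$\gg$'' and ``$\uparrow$'' via an explicit $\delta$--$\epsilon$ formulation and on the joint probability space go beyond what the paper writes, but they are compatible refinements rather than a different argument.
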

\begin{proof}
According to the chain rule:
\begin{equation} 
    \begin{aligned}
    {I(E;Y_{truth})} = I(E;M)+I(E;y_{truth}|M) \\
    {I(E;Y_{other})} = I(E;M)+I(E;y_{other}|M),
    \end{aligned}
    \label{equ:Ixy}
\end{equation}
where the upper bound of $I(E;y_{truth}|M)$ is $H(y_{truth})$. When $I(M;Y_{truth})\gg  {I(y_{truth};Y_{truth})}$ is satisfied, $H(M)\gg H(y_{truth})$ holds. Thus, maximizing $I(E;Y_{truth})$ depends mainly on maximizing $I(E;M)$, which simultaneously leads to an increase in ${I(E;Y_{other})}$.  
\end{proof}

\subsection{Proof of Definition 1}

\setcounter{definition}{0}
\begin{definition}[Proposed in paper]
    \textbf{Difference $Diff$ in mutual information.}The difference $Diff$ in mutual information between two labels is equal to the mutual information of their non-overlapping parts.
\end{definition}

\begin{proof}
    \begin{equation}
    \begin{split} & I(E;Y_{truth})-I(E;Y_{other}) =\\
    & I(E;M,y_{truth})-I(E;M,y_{other})\\
    &= [H(M,y_{truth})-H(M,y_{truth}|E)]  \\
    & \;\;\;\; - [H(M,y_{other})-H(M,y_{other}|E)] \\
    &= [[H(M)+H(y_{truth})]-[H(M|E)+H(y_{truth}|E)]]\\
    &\;\;\;\;-[[H(M)+H(y_{other})]-[H(M|E)+H(y_{other}|E)]] \\
    &= I(E;y_{truth}) - I(E;y_{other})
    \end{split}
    \end{equation}
\end{proof}

\subsection{Proof of Proposition 3}
\setcounter{proposition}{2}  
\begin{proposition}
    The process of maximizing $Diff$ can be interpreted as maximizing the mutual information between the sample and each structure sub-properties.
\end{proposition}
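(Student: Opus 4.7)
The plan is to start from Definition 1 and expand the aggregated non-overlapping labels into their component sub-properties, then invoke the independence structure of Assumption 1 to split each joint mutual information into a sum over individual coordinates. First I would write $y_{truth}=(y_{1}^{truth},\dots,y_{t}^{truth})$ and $y_{other}=(y_{1}^{other},\dots,y_{t}^{other})$ and substitute these expansions into the identity $Diff=I(E;y_{truth})-I(E;y_{other})$ that Definition 1 already establishes, so the target reduces to showing that each side decomposes coordinate-wise.

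Next, I would rewrite $I(E;y_{truth})=H(y_{truth})-H(y_{truth}\mid E)$. Since Assumption 1 declares the $k$ structural sub-properties to be independent, the marginal entropy splits cleanly as $H(y_{truth})=\sum_{i=1}^{t} H(y_{i}^{truth})$. The crucial technical step is establishing the analogous additive decomposition for the conditional entropy, $H(y_{truth}\mid E)=\sum_{i=1}^{t} H(y_{i}^{truth}\mid E)$, which then yields $I(E;y_{truth})=\sum_{i=1}^{t} I(E;y_{i}^{truth})$ and, by the same reasoning, $I(E;y_{other})=\sum_{i=1}^{t} I(E;y_{i}^{other})$. Subtracting the two sums produces exactly the telescoping expression stated in the proposition.

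The main obstacle will be the conditional-independence step, because Assumption 1 as literally stated furnishes only \emph{marginal} independence of the sub-properties, whereas the additive decomposition of $H(\,\cdot\,\mid E)$ requires \emph{conditional} independence given the representation. I would bridge this gap in one of two complementary ways: either strengthen Assumption 1 to require that the sub-properties remain independent given $E$ — a natural reading when $E$ is viewed as a sufficient statistic carrying a separate signal for each symmetry dimension — or appeal to the architectural design encoded in Equation~(5), where disjoint classifier heads $\text{Cls}^{Sup_t}$ consume different portions of $E$, so that the representation factorizes along the sub-property axes by construction. With this justification in place, the remainder is a direct application of the chain rule for entropy and introduces no further subtleties; I would finish by contrasting the one-line reindexing with the intuitive statement in the paper that pushing up $Diff$ is equivalent to pushing up each $I(E;y_{i}^{truth})$ individually.
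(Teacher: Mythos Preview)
Your proposal is correct and lands on exactly the same coordinate-wise decomposition as the paper; the only difference is presentational. The paper applies the chain rule for mutual information, writing $I(E;y_{truth})=\sum_{i=1}^{t} I(E;y_i^{truth}\mid y_{i-1}^{truth},\ldots,y_1^{truth})$ and then dropping the conditioning ``by the assumption of independence,'' whereas you go through $I(E;y_{truth})=H(y_{truth})-H(y_{truth}\mid E)$ and split each entropy term separately. Both routes need the very step you flagged as the main obstacle: reducing $I(E;y_i\mid y_{<i})$ to $I(E;y_i)$ (equivalently, $H(y_i\mid E,y_{<i})$ to $H(y_i\mid E)$) requires conditional independence of the sub-properties given $E$, not merely the marginal independence literally stated in Assumption~1. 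The paper glosses over this, so your explicit discussion of how to justify or strengthen the assumption is a genuine improvement in rigor rather than a deviation from the intended argument.
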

  
\begin{proof}
\begin{equation}
\begin{aligned} 
Diff =& I(E;y_{truth})-I(E;y_{other})\\
=& I(E;y_{{1}}^{truth},...,y_{{t}}^{truth}) - I(E;y_{{1}}^{other},...,y_{{t}}^{other})\\
=& \sum_{i=1}^{t}{I(E;{y_{i}^{truth}}|{y_{{i-1}}^{truth}},...,{y_{2}^{truth}},{y_{1}^{truth}}}) - \\
&\sum_{i=1}^{t}{I(E;{y_{i}^{other}}|{y_{{i-1}}^{other}},...,{y_{2}^{other}},{y_{1}^{other}}}) \\ 
\end{aligned}
\end{equation} where $\{y^{truth}_i\}$ and $\{y^{other}_i\}$ denote the $i$-th different substructure symmetry properties in $Y_{truth}$ and $Y_{other}$ respectively. Furthermore, according to the assumption of independence, 
\begin{equation}
    \begin{aligned}
        Diff =& \sum_{i=1}^t{I(E;y_{i}^{truth})} - \sum_{i=1}^t{I(E;y_{i}^{other})}\\
        =& \sum_{i=1}^t{[I(E;y_{i}^{truth})-I(E;y_{i}^{other})]} \\
    \end{aligned}
\end{equation}

Therefore, the process of maximizing $Diff$ can be interpreted as maximizing the mutual information between the sample and each structure sub-properties. In other words, we can optimize the model's ability to learn each sub-properties to alleviate the confusion phenomenon.
\end{proof}

\section{Dataset}\label{asec:dataset}
\subsection{Main Dataset}
We use the MOF dataset of over 280,000 metal-organic frameworks (MOFs) \citep{kitagawa2014metal,furukawa2013chemistry,james2003metal,zhou2012introduction} from the Cambridge Crystallographic Data Centre (CCDC) \citep{allen1979cambridge} for our experiments, as \citep{yu2024xrdmamba}. 

Compared to other crystals, MOFs have richer building units, more complex types of chemical bonds, a wider variety of topological structures, and more flexible pore structures. Therefore, the complexity of the MOF data structures is higher than that of most other crystals. Utilizing the MOF dataset as our experimental data not only enriches the knowledge learned by the model but also fully demonstrates the model’s performance. The entire MOF dataset encompasses 225 out of 230 space group categories, as shown in \textbf{\underline{Figure \ref{fig:distribution}}}. Among these, common space groups such as \texttt{P-1}, \texttt{C2/c}, \texttt{P21/c}, \texttt{Pbcn}, and \texttt{P21212} have a larger number of samples, while more complex space groups like \texttt{P4}$_2$\texttt{mc}, \texttt{P4mm}, and \texttt{P6mm} contain only a few samples.

To enhance the credibility of the experimental results, we adopted a half-split approach. Specifically, for a given space group with $c$ samples, we placed $\lfloor c/2 \rfloor$ samples into the training set and $\lceil c/2 \rceil$ samples into the test set. Additionally, we utilized MOF-Balanced as another test dataset. This dataset is a balanced subset sampled from the test set, randomly selecting 10 samples from each space group that has more than 10 samples. The use of MOF-Balanced is primarily to prevent potential biases in the experimental results due to imbalanced testing data in the original MOF dataset. By employing both balanced and imbalanced test datasets, we can observe more objective and nuanced experimental results.

We conducted multiple experiments on both MOF and MOF-Balanced datasets, comprehensively recording the Top-1 accuracy, Top-2 accuracy, Top-5 accuracy, F1-Score, and Recall for each method. These diverse performance metrics are used to reflect the performance gap between XRDecoupler and the baselines.

\begin{figure}[h!]
    \centering
    \includegraphics[width=\linewidth]{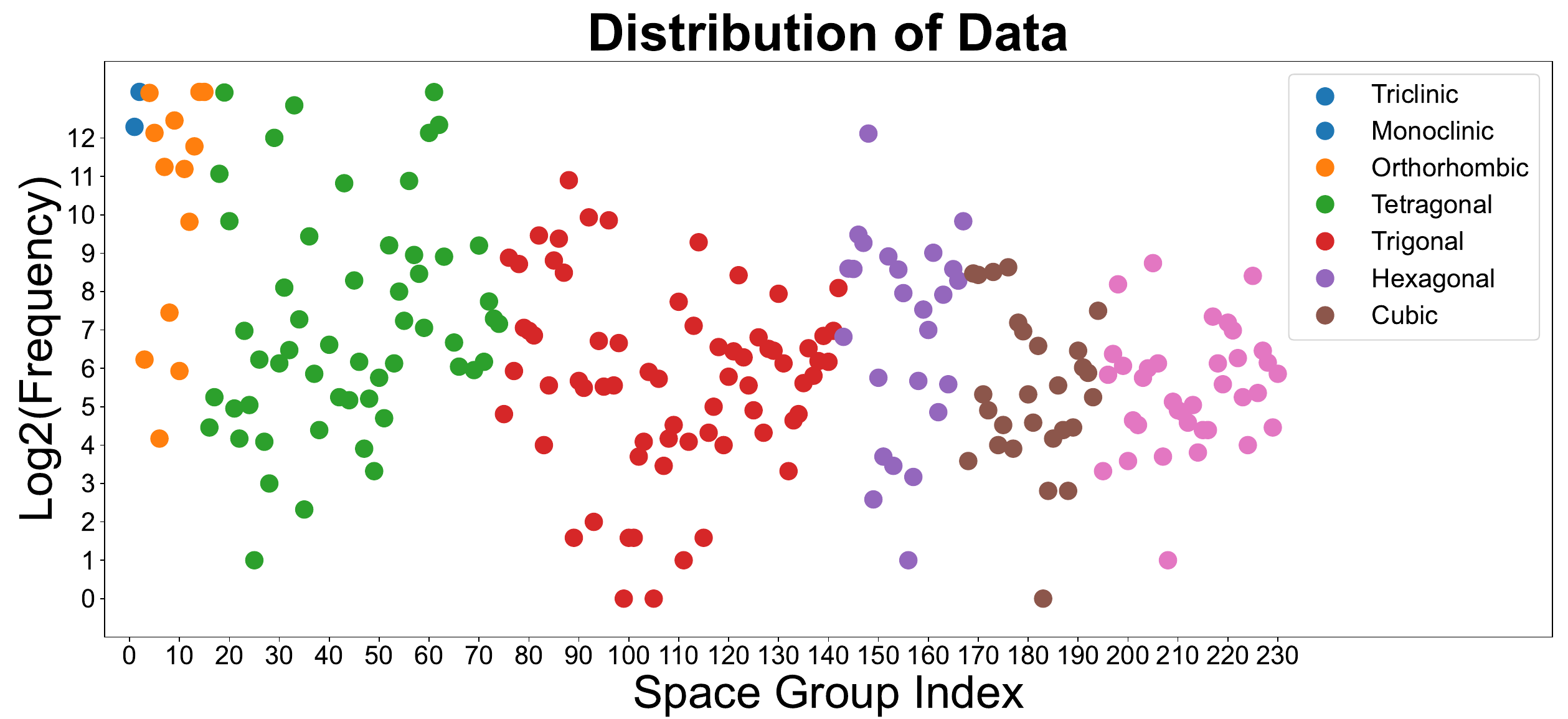}
    \caption{Data distribution of the MOF dataset from CCDC. Due to the significant disparity in sample sizes between different space groups, we applied a logarithmic transformation to the overall distribution. Additionally, different colors represent different crystal systems, revealing that the sample distribution of various space groups within the same crystal system is also quite diverse.}
    \label{fig:distribution}
\end{figure}

\subsection{Other Dataset}
To verify the efficacy of our method on inorganic and small-scale data, we have introduced two additional datasets. The first is CoREMOF \citep{chung2019advances}, a compilation of 14,000 porous, three-dimensional metal-organic framework structures. The second is an inorganic crystal dataset (called InorganicData), comprised of more than 7,000 inorganic crystals sourced from \citep{salgado2023automated}. We conducted a range of tests on these two comparatively small datasets using the same experimental setup as the main dataset. The experimental results further substantiate the effectiveness of our method.

\section{Baseline}\label{asec:baseline}
\begin{itemize}
    \item \textbf{MLP}~\citep{salgado2023automated}: A simple multi-layer perceptron model is used to transform the input sequence and obtain the final categories.
    \item \textbf{CNN}~\citep{salgado2023automated}: This model employs alternating large-kernel 1D convolutional layers and pooling layers to quickly reduce the dimensionality of the input sequence, followed by a classifier for final classification.
    \item \textbf{NoPoolCNN}~\citep{salgado2023automated}: This variant removes the pooling layers from the CNN. The authors proposed that eliminating pooling layers helps mitigate information compression in the data, thereby better preserving contextual information.
    \item \textbf{RCNet}~\citep{chen2024crystal}: Building on CNN, this model introduces residual connections, replacing the stacking of conventional convolutional layers with stacks of residual blocks, further optimizing the network structure.
    \item \textbf{XRDMamba}~\citep{yu2024xrdmamba}: This model introduces a new perspective on PXRD pattern and innovatively incorporates the Mamba model into the space group classification task for the first time.
\end{itemize}
 
\section{Implementation Details}\label{asec:id}
We implemented our method using PyTorch and trained our model on an 80G NVIDIA A100 GPU. During training, we set both the local and global representation sizes to 32 dimensions. In the global module, we configured each patch size to 10, with the peak intensity features $e_{intensity}$ set to 24 dimensions and the learnable features $e_{learnable}$ set to 8 dimensions. We defined the number of attention blocks to be 4, with a hidden layer dimension of 128 during the feed-forward stage.

We trained our model for 150 epochs using the AdamW optimizer, with a weight decay of $2 \times 10^{-5}$ and an initial learning rate of $5 \times 10^{-4}$. For the first twenty epochs, the learning rate remained constant, while in the subsequent 130 epochs, we gradually reduced the learning rate to $1 \times 10^{-6}$ using a cosine annealing schedule.

\section{More Analysis}\label{asec:exp}

\subsection{Other Benchmark Result}
\textbf{\underline{Table \ref{tbl:CoREMOF}}} and \textbf{\underline{\ref{tbl:InorganicData}}} presents the results of benchmark experiments conducted on two datasets: CoREMOF and InorganicData. We can observe that the XRDecoupler enhances accuracy and F1-score by more than 4\% on CoREMOF and 4-5\% on InorganicData. 
 The experimental results suggest that XRDecoupler exhibits superior performance when compared to the baseline on both datasets. It is important to note, however, that both data sets are relatively small and do not cover a comprehensive range of crystals. Consequently, when faced with larger data sets, such as MOF from CCDC, XRDecoupler is likely to deliver even more significant performance enhancements.

\begin{table}[h!]
    \begin{center}
    \caption{Evaluation on CoREMOF dataset with state-of-the-art methods. \textbf{Blod} indicates the best performance.}
    \label{tbl:CoREMOF}
    \scalebox{0.9}{
        \begin{tabular}{l  c c c }
            \toprule[2pt]
            \multicolumn{1}{l}{\multirow{2}{*}{Method}} & \multicolumn{3}{c}{Accuracy (\%)} \\
            \cmidrule{2-4}
            & Top-1 & Top-2 & F1 Score\\
            \midrule
            MLP      & 13.31    & 22.36  & 3.01  \\
            CNN~        & 20.65  & 36.97  & 0.66  \\
            \midrule
            NoPoolCNN~  & 21.89   & 38.86   & 2.75 \\
            RCNet~           & 45.54 & 57.73 &   30.12   \\
            XRDMamba~         & 37.67 & 51.04 & 23.72  \\
            \midrule
            \rowcolor{gray!10} XRDecoupler         & \textbf{49.73}  & \textbf{60.54} & \textbf{34.01}   \\
            \bottomrule[2pt]
        \end{tabular}
    }
    \end{center}
\end{table}

\begin{table}[h!]
    \begin{center}
    \caption{Evaluation on InorganicData dataset with state-of-the-art methods. \textbf{Blod} indicates the best performance.}
    \label{tbl:InorganicData}
    \scalebox{0.9}{
        \begin{tabular}{l  c c c }
            \toprule[2pt]
            \multicolumn{1}{l}{\multirow{2}{*}{Method}} & \multicolumn{3}{c}{Accuracy (\%)} \\
            \cmidrule{2-4}
            & Top-1 & Top-2 & F1 Score\\
            \midrule
            MLP & 21.46    & 25.73  & 13.20  \\
            CNN  & 24.28  & 44.23  & 0.55  \\
            \midrule
            NoPoolCNN & 26.48   & 47.02   & 0.86 \\
            RCNet & 38.37 & 56.92 &   22.46   \\
            XRDMamba & 46.20 & 61.32 & 33.79  \\
            \midrule
            \rowcolor{gray!10} XRDecoupler         & \textbf{51.08}  & \textbf{63.90} & \textbf{39.00}   \\
            \bottomrule[2pt]
        \end{tabular}
    }
    \end{center}
\end{table}
 
\begin{figure*}[h!]
    \centering
    \includegraphics[width=\linewidth]{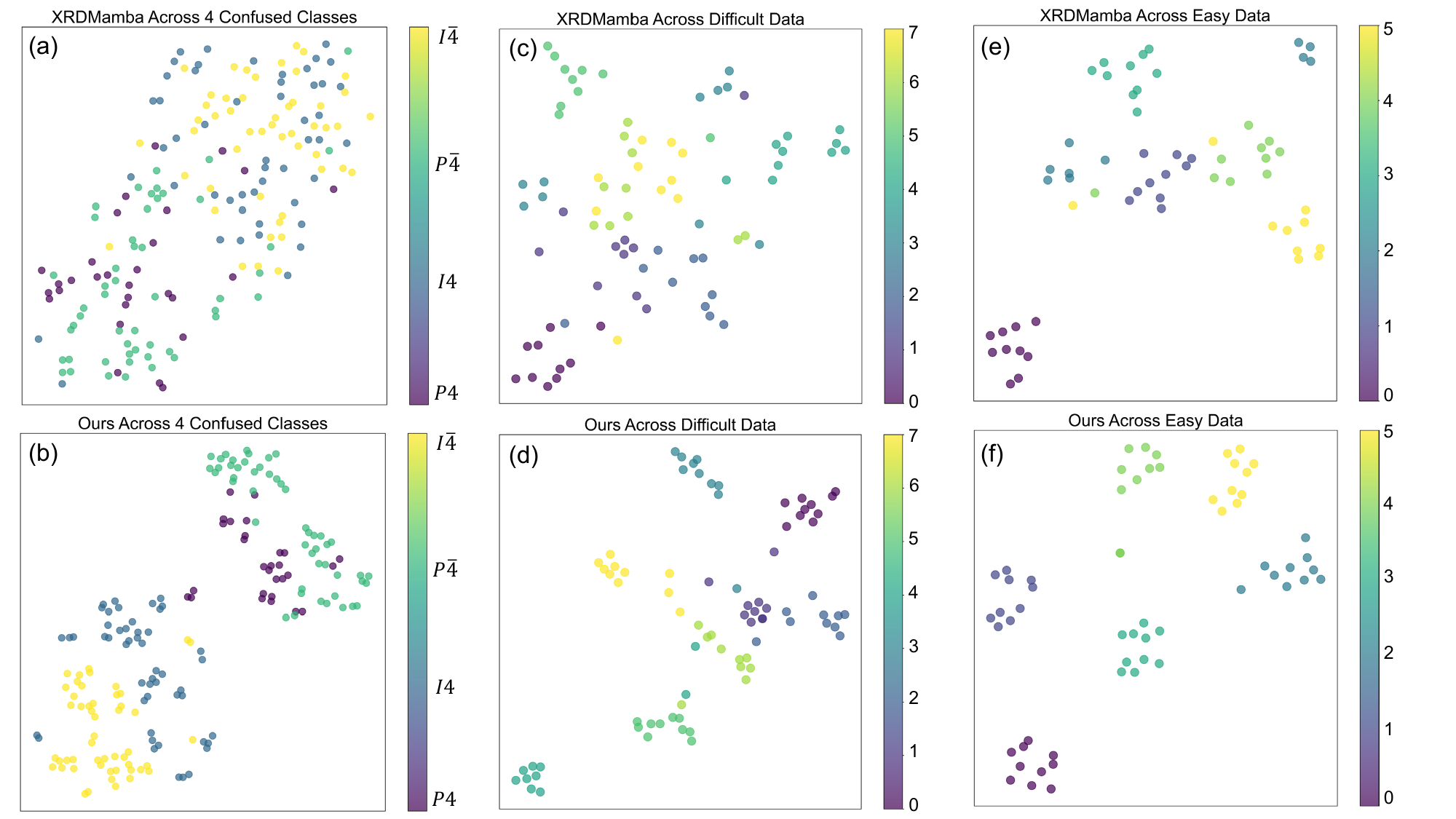}
    \caption{T-SNE Visualization Analysis of XRDMamba and XRDecoupler in the test set. We provide a comparison of confused classes, difficult classes, and easy classes.}
    \label{fig:tsne6}
\end{figure*}

\subsection{T-SNE Visualization Analysis of Confusion Phenomena}
Figures \ref{fig:tsne6}(a) and \ref{fig:tsne6}(b) represent the aforementioned four easily confused categories: \texttt{P4}, \texttt{I4}, \texttt{P}$\overline{4}$, and \texttt{I}$\overline{4}$. From Figure \ref{fig:tsne6}(a), it is evident that XRDMamba does not achieve good fitting for these four categories, showing a significant overlap where \texttt{P4} is mixed with \texttt{P}$\overline{4}$ and \texttt{I4} is mixed with \texttt{I}$\overline{4}$, indicating a serious confusion problem. In contrast, the representations obtained by our method in Figure \ref{fig:tsne6}(b) exhibit clear distinctions across these four categories. Although perfect differentiation has not yet been achieved for some individual samples, the confusion phenomenon has been significantly improved compared to previous methods.

Next, we sampled some space group categories that performed poorly and well on the XRDMamba model, and conducted further comparisons on these samples. Figures \ref{fig:tsne6}(c) and \ref{fig:tsne6}(d) display 8 space group categories that performed very poorly on XRDMamba. As seen in Figure (c), there is significant confusion among several classes in the central region. After introducing our method, although some samples remain difficult to distinguish, there has been a notable improvement in the overall representation space compared to before.  Figures \ref{fig:tsne6}(e) and \ref{fig:tsne6}(f) show 6 categories of samples that performed well on XRDMamba. From Figure \ref{fig:tsne6}(e), we can observe that their representations in XRDMamba already exhibit a certain level of distinction. For these samples, the representations obtained using our method (shown in Figure \ref{fig:tsne6}(f)) achieve nearly perfect discriminability.

\subsection{Crystal Scale Adaptability}
We present the predictive performance of our method on crystals of different scales in \textbf{\underline{Figure \ref{fig:big_coord_sample}}}. It can be found that XRDecoupler can accurately find the correct space group type for different scale crystal materials belonging to different space groups, especially for crystals with more than 500 atoms. This indicates the good adaptability of XRDecoupler at the crystal scale.

\begin{figure*}[h!]
    \centering
    \includegraphics[width=\linewidth]{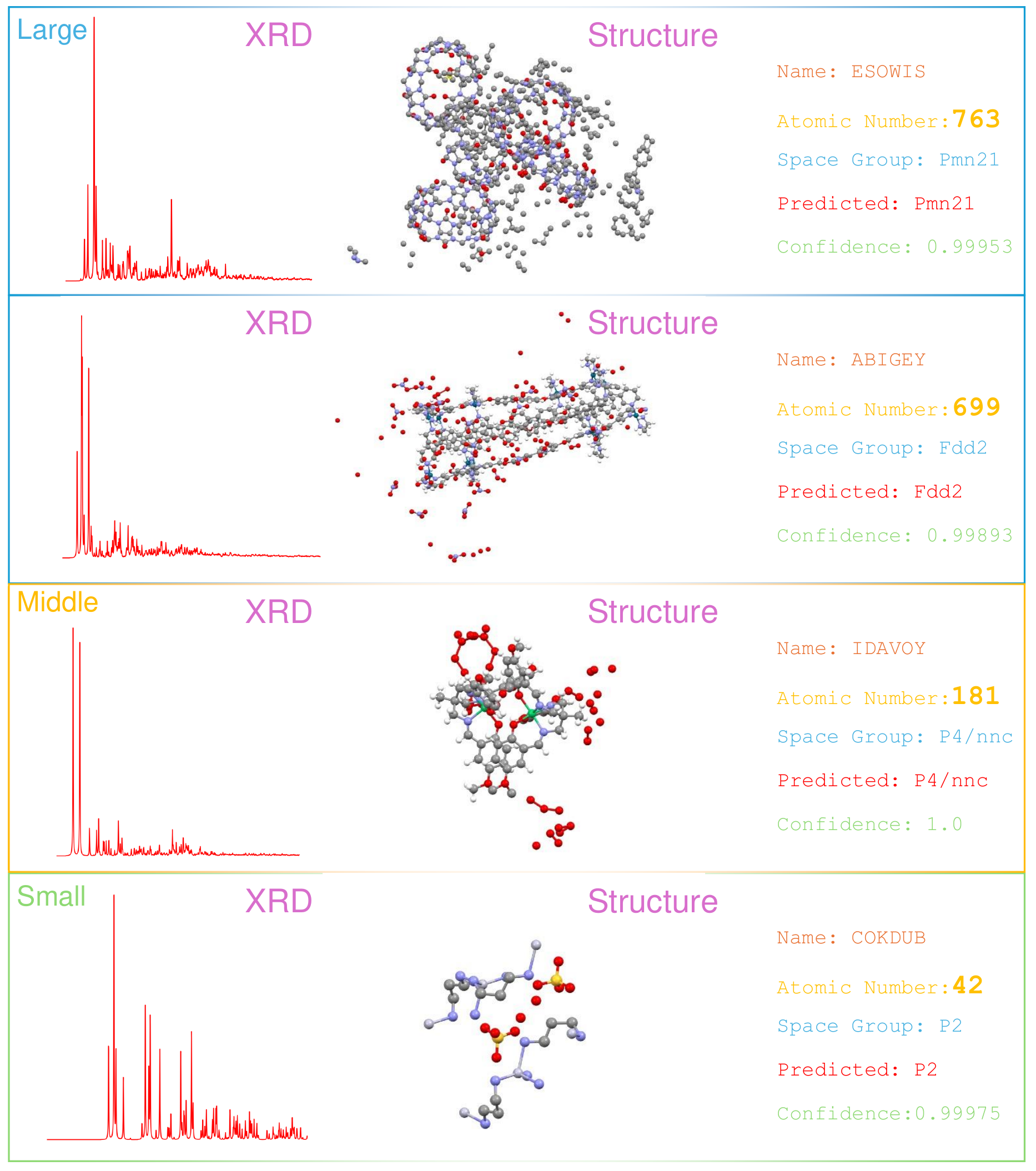}
    \caption{The performance of our method on molecules at different scales.}
    \label{fig:big_coord_sample}
\end{figure*}

\subsection{Ablation Study}
To validate the effectiveness of the Superclass-Guided optimization (SG) and Hierarchical PXRD Pattern Learning (HPSL) modules, we conducted experiments by replacing the encoder model and removing SG. As shown in \textbf{\underline{Table \ref{tbl:ablation_study}}}, when HPSL was used to replace XRDMamba as the encoder (without the SG), we achieved accuracy improvements of 5.81\% and 7.04\% on the MOF and MOF-Balanced test sets, respectively. Other metrics on the MOF test set improved by 2\% to 6\%, while those on the MOF-Balanced test set increased by 7\% to 8\%. These results demonstrate that our proposed HPSL effectively understands PXRD Pattern data, captures detailed information, and encodes robust representations.

Next, we introduced the superclass-guided optimization method on both XRDMamba and HPSL. When this method was added to XRDMamba, accuracy on the MOF and MOF-Balanced test sets improved by 1.9\% and 2.53\%, respectively, with other metrics also increasing by 1\% to 4\%. After introducing the SG method to HPSL, the accuracy on the MOF and MOF-Balanced test sets improved by 2.08\% and 3.13\%, respectively. Thus, it is clear that the introduction of superclasses further enhances the model's learning capability, demonstrating superior performance.

\begin{table*}[h!]
    \centering
    \caption{Ablation study of the effectiveness of SuperClass-guided (\textit{SG}) optimization mechanism and Hierarchical PXRD Pattern Learning (HPSL). $\Delta$ represents the performance improvement before and after using \textit{SG}.}
    \label{tbl:ablation_study}
    \scalebox{0.7}{
        \begin{tabular}{c c ccccc c ccc }
            \toprule[2pt]
            \multicolumn{1}{c}{\multirow{2}{*}{Method}} & \multicolumn{1}{l}{\multirow{2}{*}{\textit{w. SG}}} & \multicolumn{5}{c}{MOF} & & \multicolumn{3}{c}{MOF-Balanced} \\
            \cmidrule{3-7}\cmidrule{9-11}
            && Acc@Top-1 & Acc@Top-2 & Acc@Top-5 & F1 Score & Recall && Acc@Top-1 & Acc@Top-2 & Acc@Top-5 \\
            \midrule
            \multicolumn{1}{c}{\multirow{2}{*}{XRDMamba}}    & \ding{56} & 72.20 & 85.20 & 93.42 & 47.59 & 46.00 && 48.70 & 61.70 & 74.83 \\
                                                             & \ding{52} & 74.10 & 86.10 & 93.98 & 49.96 & 48.35 && 51.23 & 64.53 & 78.37 \\
            $\Delta$                        &     -     & {$+$1.90} & {$+$0.90} & {\color{black}$+$0.56} & {$+$2.37} & {$+$2.35} && {$+$2.53} & {$+$2.83} & {$+$3.54} \\
            \midrule
            \multicolumn{1}{c}{\multirow{2}{*}{HPSL}} & \ding{56} & 78.01 & 89.37 & 95.83 & 53.63 & 52.17     && 55.74 & 69.08 & 81.01 \\
                                                             & \ding{52} & 80.09 & 90.11 & 96.26 & 56.72 & 55.18 && 58.87 & 72.42 & 85.22 \\
            $\Delta$                        &     -     & {$+$2.08} & {$+$0.74} & {$+$0.43} & {$+$3.09} & {$+$3.01} && {$+$3.13} & {$+$3.34} & {$+$4.21} \\
            \bottomrule[2pt]
        \end{tabular}
    }
\end{table*}

\subsection{Challenge Case Study}
In our analysis of the experimental results, we particularly focused on the model's performance on complex crystal structures with limited sample sizes. To this end, we specifically selected some challenging samples from the test set for analysis and evaluation. As shown in \textbf{\underline{Figure \ref{fig:case_study}(top)}}, we were excited to find that although the \texttt{P6mm} space group has only one sample in the training set, our model was still able to correctly predict the \texttt{DUJWUA10} sample in the test set. This indicates that our model retains a certain structural analysis capability even for complex structures.

However, it is also noteworthy that the model still exhibits prediction biases for many complex crystals. For example, in the case of \texttt{DECKAA} shown in \textbf{\underline{Figure \ref{fig:case_study}(bottom)}}, the model successfully predicted 2 superclasses but made an error in predicting the point group. This suggests that while our model is close to making correct predictions for these complex crystals, it still falls short, and we attribute this bias to the scarcity of samples. Due to the inherent issue of sample scarcity in complex crystal structures, even though we introduced superclasses to guide the samples, this subset inevitably gets overshadowed by the more abundant samples from other space groups during training, leading to incorrect judgments by the model. Enhancing the model’s focus on these challenging samples will therefore be one of our primary areas of focus in the next phase of our work.

\begin{figure}[h!]
    \centering
    \includegraphics[width=0.5\linewidth]{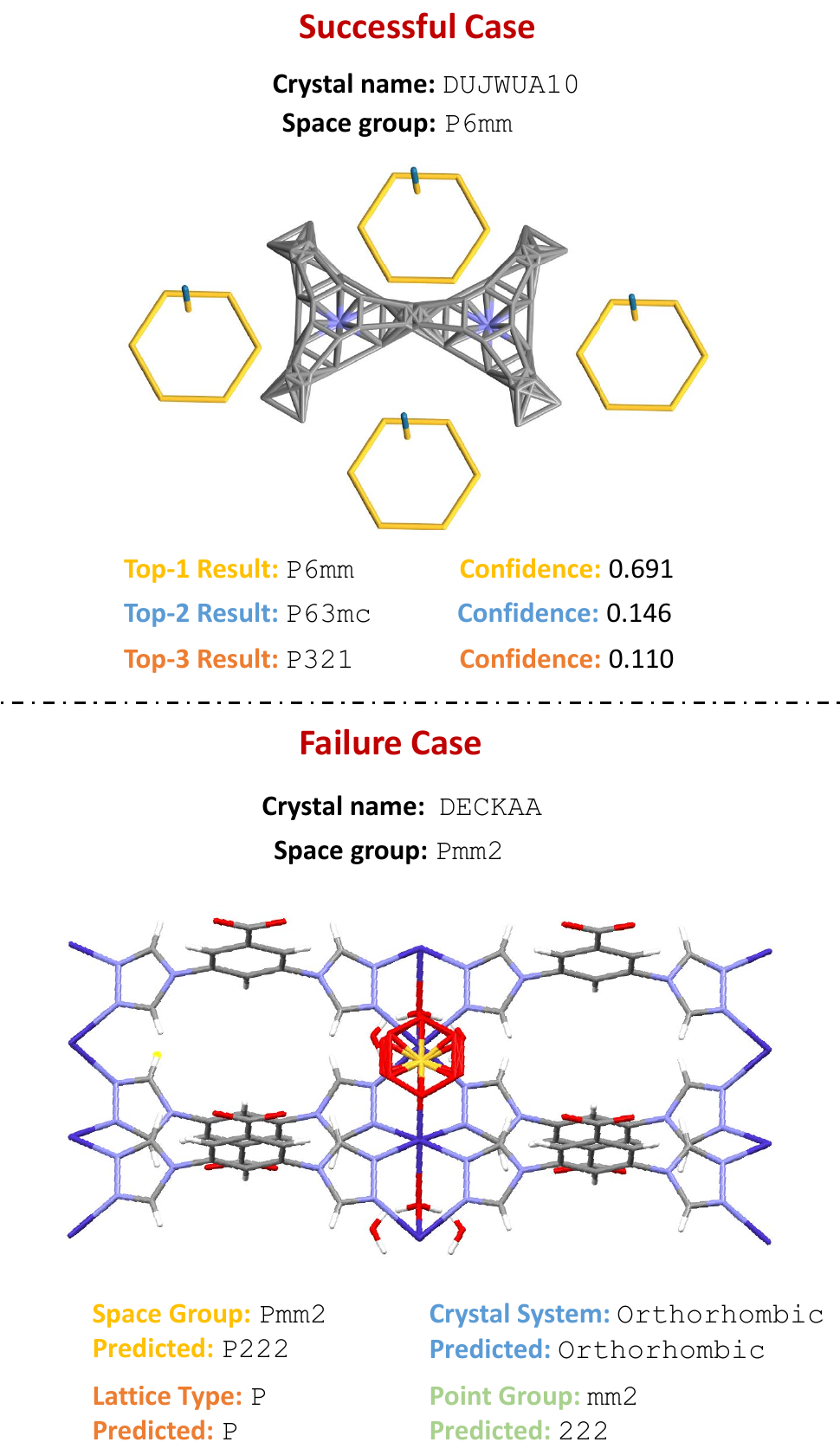}
    \caption{The classification results and relevant information for two crystal cases are presented, including a successful case (\texttt{DUJWUA10}) and a failed case (\texttt{DECKAA}). This figure highlights the differences in the classification model's performance across various space group predictions by comparing the successful and failed cases.}
    \label{fig:case_study}
\end{figure}

\section{Limitations and Future Works}\label{asec:l&f}
Here, we further analyze the limitations of XRDecoupler and propose ways to address its limitations in future work:

\begin{itemize}
    \item In this study, we innovatively introduced the use of superclasses to guide space group learning. However, due to the large number of superclasses associated with space groups, enumerating all possible combinations for experimentation is a significant challenge. In our current work, we only selected key superclasses. In future work, we will continue to explore and analyze whether there are more optimal combinations of superclasses that can maximize the effectiveness of the superclass guidance.
    \item In the space group prediction task, due to the high cost of obtaining crystal data, existing works have primarily trained models on a small subset of fine-grained crystal data, lacking a unified model capable of predicting all types of crystals. Although our work focuses on the complex and challenging metal-organic frameworks (MOFs), our generalization studies reveal that even though inorganic crystal structures are significantly simpler than MOF structures, their performance on out-of-domain data is still lower than that on in-domain data. Therefore, in future work, we aim to achieve unified predictions by incorporating more crystal data, and training a model that demonstrates good predictive performance across various types of crystal data.
    \item Existing crystal data inevitably faces the issue of data imbalance. Case studies have shown that the model performs poorly on many complex space groups with limited sample sizes. Improving the model's performance on this subset of data will also be one of our key focuses in future work. 
\end{itemize}

\section{Related Works}\label{asec:rw}
\subsection{Crystalline Space Group Identification}
Quickly identifying the space groups of crystal materials is crucial for predicting their three-dimensional structures. Early works\citep{werner1985treor} primarily relied on computational methods for complex derivations. Recent studies have attempted to apply machine learning and deep learning techniques to analyze diffraction data, aiming to improve the efficiency and accuracy of space group recognition. For example, Park et al.\citep{park2017classification} introduced convolutional neural networks (CNNs) for space group identification, but their trained model was only tested on two experimental spectra, with incorrect predictions on one of them. Subsequently, Vecsei et al.\citep{ziletti2018insightful,oviedo2019fast,vecsei2019neural,dong2021deep} also employed CNNs, training similar deep models that achieved decent results. However, their trained models were either used to identify a subclass within crystal symmetry categories or only involved data from a single domain within material databases. In contrast, Salgado et al.\citep{salgado2023automated} utilized a large amount of crystal data that nearly covered all types of space groups, proposing a model called NPCNN. While this method achieved comprehensive predictions, its recognition accuracy was not satisfactory. Following this, RCNet\citep{chen2024crystal} customized crystal structure categories to reduce recognition difficulty and incorporated residual structures into convolutional networks, further enhancing recognition performance. XRDMamba\citep{yu2024xrdmamba} was the first to integrate chemical knowledge into the model, presenting a new perspective for modeling PXRD spectra and incorporating the Mamba\citep{gu2023mamba,gu2023modeling} architecture, which yielded good results on complex MOF datasets.

However, the aforementioned methods did not fully leverage relevant chemical rules in their model design, lacking targeted designs related to space group recognition tasks, and they suffered from significant confusion between classes. To address these issues, our proposed method comprehensively considers the confusion challenges faced in space group recognition from a chemist's perspective. By treating multiple recognition approaches as superclasses to make collective decisions, we effectively mitigate the performance decline caused by confusion and enhance the model's generalization ability.

\subsection{Superclass Learning}
Superclass learning aims to enhance traditional deep learning by incorporating superclass labels as intermediate supervision. A superclass-guided network can effectively integrate high-level semantic information into the model, improving its representational capacity. Recently, superclass learning has garnered increasing attention from researchers, many of whom have introduced superclasses into their work and made adaptive designs for specific tasks, resulting in significant performance improvements. For example, in image classification and object detection, Dehkordi et al.\citep{dehkordi2022multi}  proposed a two-stage multi-expert framework that utilizes superclass supervision. In the context of imbalanced learning, Zhou et al.\citep{zhou2018deep} alleviated data imbalance by clustering original classes into a balanced superclass space. Superdisco et al.\citep{du2023superdisco} innovatively introduced the concept of superclasses into graph neural networks to learn the relationships between superclasses and samples. Additionally, in synthetic aperture radar (SAR) target classification tasks, MSA-SCNN\citep{wang2022sar} incorporated superclass labels to enhance feature differentiation among classes, thereby improving the model's generalization ability. 

The benefits of incorporating superclasses have been well-documented across various studies. Given the unique classification system of space groups and the compatibility of the superclass concept, we have also innovatively introduced superclasses into space group recognition tasks. By guiding the model with superclasses, we enable it to learn more detailed structural knowledge about crystals, significantly enhancing its performance in space group recognition.

 
\end{document}